\theoremstyle{plain}
\newtheorem{thm}{Theorem}[section]
\newtheorem*{thm-informal}{Theorem (Informal)}
\newtheorem{lemma}[thm]{Lemma}
\theoremstyle{definition}
\newtheorem{definition}[thm]{Definition}
\newtheorem{remark}[thm]{Remark}
\newcommand{\ts}[2]{\mathcal{T\!S}_{#1}\left(#2\right)}
\newcommand{\norm}[1]{\left\|#1\right\|}
\newcommand{\abs}[1]{\left|#1\right|}
\newcommand{\reals}{\mathbb{R}}
\newcommand{\naturals}{\mathbb{N}}
\newcommand{\bigO}{\mathcal{O}}
\newcommand{\ttfrac}[2]{\nicefrac{#1}{#2}}
\newcommand{\set}[2]{\left\{{#1}\,\left\vert\,{#2}\vphantom{{#1}}\right\}\right.}
\newcommand{\score}[2]{#1\% $\pm$ #2\%}
\newcommand{\bfscore}[2]{\textbf{#1\%} $\mathbf{\pm}$ \textbf{#2\%}}
\newcommand{\aucscore}[2]{#1 $\pm$ #2}
\newcommand{\bfaucscore}[2]{\textbf{#1} $\mathbf{\pm}$ \textbf{#2}}
\newcommand{\boldheading}[1]{

\textbf{#1}\quad}
\title{Neural Controlled Differential Equations for Irregular Time Series}
\author{Patrick Kidger \qquad James Morrill \qquad James Foster \qquad Terry Lyons\\[2pt]
	Mathematical Institute, University of Oxford \\
	The Alan Turing Institute, British Library \\
	\texttt{\{kidger, morrill, foster, tlyons\}@\hspace{0.8pt}maths.ox.ac.uk}}
\begin{document}
	\maketitle
	\begin{abstract}
		Neural ordinary differential equations are an attractive option for modelling temporal dynamics. However, a fundamental issue is that the solution to an ordinary differential equation is determined by its initial condition, and there is no mechanism for adjusting the trajectory based on subsequent observations. Here, we demonstrate how this may be resolved through the well-understood mathematics of \emph{controlled differential equations}. The resulting \emph{neural controlled differential equation} model is directly applicable to the general setting of partially-observed irregularly-sampled multivariate time series, and (unlike previous work on this problem) it may utilise memory-efficient adjoint-based backpropagation even across observations. We demonstrate that our model achieves state-of-the-art performance against similar (ODE or RNN based) models in empirical studies on a range of datasets. Finally we provide theoretical results demonstrating universal approximation, and that our model subsumes alternative ODE models.
	\end{abstract}
	\section{Introduction}
	Recurrent neural networks (RNN) are a popular choice of model for sequential data, such as a time series. The data itself is often assumed to be a sequence of observations from an underlying process, and the RNN may be interpreted as a discrete approximation to some function of this process. Indeed the connection between RNNs and dynamical systems is well-known \cite{FUNAHASHI1993801, rnn-dynamical, E2017, nais-net}.
	
	However this discretisation typically breaks down if the data is irregularly sampled or partially observed, and the issue is often papered over by binning or imputing data \cite{gelman2007dataanalysis}.
	
	A more elegant approach is to appreciate that because the underlying process develops in continuous time, so should our models. For example \cite{Che2018, BRITS, NeuralHawkes, discreteevent} incorporate exponential decay between observations, \cite{gp-adapter1, gp-adapter2} hybridise a Gaussian process with traditional neural network models, \cite{interpolation-prediction} approximate the underlying continuous-time process, and \cite{latent-odes, gru-ode-bayes} adapt recurrent neural networks by allowing some hidden state to evolve as an ODE. It is this last one that is of most interest to us here.
	
	\subsection{Neural ordinary differential equations}
	Neural ordinary differential equations (Neural ODEs) \cite{E2017, neural-odes}, seek to approximate a map $x \mapsto y$ by learning a function $f_\theta$ and linear maps $\ell^1_\theta$, $\ell^2_\theta$ such that
	\begin{equation}\label{eq:node}
	y \approx \ell^1_\theta(z_T),\quad\text{where}\quad z_t = z_0 + \int_0^t f_\theta(z_s) \mathrm{d} s\quad\text{and}\quad z_0 = \ell^2_\theta(x).
	\end{equation}
	Note that $f_\theta$ does not depend explicitly on $s$; if desired this can be included as an extra dimension in $z_s$ \cite[Appendix B.2]{neural-odes}.
	
	Neural ODEs are an elegant concept. They provide an interface between machine learning and the other dominant modelling paradigm that is differential equations. Doing so allows for the well-understood tools of that field to be applied. Neural ODEs also interact beautifully with the manifold hypothesis, as they describe a flow along which to evolve the data manifold.
	
	This description has not yet involved sequential data such as time series. The $t$ dimension in equation \eqref{eq:node} was introduced and then integrated over, and is just an internal detail of the model.
	
	However the presence of this extra (artificial) dimension motivates the question of whether this model can be extended to sequential data such as time series. Given some ordered data $(x_0, \ldots, x_n)$, the goal is to extend the $z_0 = \ell^2_\theta(x)$ condition of equation \eqref{eq:node} to a condition resembling ``$z_0 = \ell(x_0), \ldots, z_n = \ell(x_n)$'', to align the introduced $t$ dimension with the natural ordering of the data.
	
	The key difficulty is that equation \eqref{eq:node} defines an ordinary differential equation; once $\theta$ has been learnt, then the solution of equation \eqref{eq:node} is determined by the initial condition at $z_0$, and there is no direct mechanism for incorporating data that arrives later \cite{nais-net}.
	
	
	However, it turns out that the resolution of this issue -- how to incorporate incoming information -- is already a well-studied problem in mathematics, in the field of rough analysis, which is concerned with the study of \emph{controlled differential equations}.\footnote{Not to be confused with the similarly-named but separate field of control theory.} See for example \cite{lyons1998differential, lyons2014rough, hambly2010uniqueness, chevyrev2018signature}. An excellent introduction is \cite{levy-lyons}. A comprehensive textbook is \cite{FritzVictoir10}.
	
	We will not assume familiarity with either controlled differential equations or rough analysis. The only concept we will rely on that may be unfamiliar is that of a Riemann--Stieltjes integral.
	
	\subsection{Contributions}
	We demonstrate how controlled differential equations may extend the Neural ODE model, which we refer to as the \emph{neural controlled differential equation} (Neural CDE) model. Just as Neural ODEs are the continuous analogue of a ResNet, the Neural CDE is the continuous analogue of an RNN.
	
	The Neural CDE model has three key features. One, it is capable of processing incoming data, which may be both irregularly sampled and partially observed. Two (and unlike previous work on this problem) the model may be trained with memory-efficient adjoint-based backpropagation even across observations. Three, it demonstrates state-of-the-art performance against similar (ODE or RNN based) models, which we show in empirical studies on the CharacterTrajectories, PhysioNet sepsis prediction, and Speech Commands datasets.

	We provide additional theoretical results showing that our model is a universal approximator, and that it subsumes apparently-similar ODE models in which the vector field depends directly upon continuous data.
	
	Our code is available at \url{https://github.com/patrick-kidger/NeuralCDE}. We have also released a library \texttt{torchcde}, at \url{https://github.com/patrick-kidger/torchcde}
	
	\section{Background}
	Let $\tau, T \in \reals$ with $\tau < T$, and let $v, w \in \naturals$. Let $X \colon [\tau, T] \to \reals^v$ be a continuous function of bounded variation; for example this is implied by $X$ being Lipschitz. Let $\zeta \in \reals^w$. Let $f \colon \reals^w \to \reals^{w \times v}$ be continuous.
	
	Then we may define a continuous path $z \colon [\tau, T] \to \reals^w$ by $z_\tau = \zeta$ and
	\begin{equation}\label{eq:cde}
	z_t = z_\tau + \int_\tau^t f(z_s) \mathrm{d} X_s\quad\text{for $t \in (\tau, T]$},
	\end{equation}
	where the integral is a Riemann--Stieltjes integral. As $f(z_s) \in \reals^{w \times v}$ and $X_s \in \reals^v$, the notation ``$f(z_s) \mathrm{d}X_s$'' refers to matrix-vector multiplication. The subscript notation refers to function evaluation, for example as is common in stochastic calculus.
	
	Equation \eqref{eq:cde} exhibits global existence and uniqueness subject to global Lipschitz conditions on $f$; see \cite[Theorem 1.3]{levy-lyons}. We say that equation \eqref{eq:cde} is a controlled differential equation (CDE) which is controlled or driven by $X$.
	
	
	\section{Method}	
	Suppose for simplicity that we have a fully-observed but potentially irregularly sampled time series $\mathbf{x} = ((t_0, x_0), (t_1, x_1), \ldots, (t_n, x_n))$, with each $t_i \in \reals$ the timestamp of the observation $x_i \in \reals^v$, and $t_0 < \cdots < t_n$. (We will consider partially-observed data later.)
	
	Let $X \colon [t_0, t_n] \to \reals^{v + 1}$ be the natural cubic spline with knots at $t_0, \ldots, t_n$ such that $X_{t_i} = (x_i, t_i)$. As $\mathbf{x}$ is often assumed to be a discretisation of an underlying process, observed only through $\mathbf{x}$, then $X$ is an approximation to this underlying process. Natural cubic splines have essentially the minimum regularity for handling certain edge cases; see Appendix \ref{appendix:other-schemes-for-x} for the technical details.
		
	Let $f_\theta \colon \reals^w \to \reals^{w \times (v + 1)}$ be any neural network model depending on parameters $\theta$. The value $w$ is a hyperparameter describing the size of the hidden state. Let $\zeta_\theta \colon \reals^{v + 1} \to \reals^w$ be any neural network model depending on parameters $\theta$.
	
	Then we define the \emph{neural controlled differential equation} model as the solution of the CDE
	\begin{equation}\label{eq:ncde}
	z_t = z_{t_0} + \int_{t_0}^t f_\theta(z_s) \mathrm{d} X_s\quad\text{for $t \in (t_0, t_n]$},
	\end{equation}
	where $z_{t_0} = \zeta_\theta(x_0, t_0)$. This initial condition is used to avoid translational invariance. Analogous to RNNs, the output of the model may either be taken to be the evolving process $z$, or the terminal value $z_{t_n}$, and the final prediction should typically be given by a linear map applied to this output.
	
	The resemblance between equations \eqref{eq:node} and \eqref{eq:ncde} is clear. The essential difference is that equation \eqref{eq:ncde} is driven by the data process $X$, whilst equation \eqref{eq:node} is driven only by the identity function $\iota \colon \reals \to \reals$. In this way, the Neural CDE is naturally adapting to incoming data, as changes in $X$ change the local dynamics of the system. See Figure \ref{figure:picture}.
	
\newcommand{\mainpicturedata}{
	\draw[thick, ->] (-1, 0) -- (5, 0);
	\node at (0, 0)[circle,fill, inner sep=1.5pt] {};
	\node at (0.6, 0)[circle,fill, inner sep=1.5pt] {};
	\node at (2, 0)[circle,fill, inner sep=1.5pt] {};
	\node at (4, 0)[circle,fill, inner sep=1.5pt] {};
	\node at (0, -0.3) {$t_1$};
	\node at (0.6, -0.3) {$t_2$};
	\node at (2, -0.3) {$t_3$};
	\node at (3, -0.3) {$\cdots$};
	\node at (4, -0.3) {$t_n$};
	\node at (5.6, 0) {Time};
	
	\draw[blue!80!white, thick, cap=round] (-1, 0.3) .. controls ++(45:0.5) and ++(185:0.2) .. (0, 1);
	\draw[blue!80!white, thick, cap=round] (0, 1) .. controls ++(185+180:0.2) and ++(135:0.4) .. (0.6, 0.7);
	\draw[blue!80!white, thick, cap=round] (0.6, 0.7) .. controls ++(135+180:0.4) and ++ (190:0.3) .. (2, 0.6);
	\draw[blue!80!white, thick, cap=round] (2, 0.6) .. controls ++(190+180:0.3) and ++ (135:0.7) .. (4, 0.8);
	\draw[blue!80!white, thick, cap=round] (4, 0.8) .. controls ++(135+180:0.7) and ++(200:0.2) .. (5, 0.6);
	\node at (0, 1)[circle,draw=black, fill=cyan, inner sep=1.5pt] {};
	\node at (0.6, 0.7)[circle,draw=black, fill=cyan, inner sep=1.5pt] {};
	\node at (2, 0.6)[circle,draw=black, fill=cyan, inner sep=1.5pt] {};
	\node at (4, 0.8)[circle,draw=black, fill=cyan, inner sep=1.5pt] {};
	\node at (0, 0.7) {$x_1$};
	\node at (0.6, 0.4) {$x_2$};
	\node at (2, 0.3) {$x_3$};
	\node at (4, 0.5) {$x_n$};
	\node at (5.6, 0.6) {Data $\mathbf{x}$};
	}
	
	\begin{figure}[t]
	\begin{subfigure}[b]{0.49\textwidth}\centering
	\begin{tikzpicture}[scale=0.9]
	\mainpicturedata
	
	\draw[olive, thick, cap=round, ->] (0, 2) -- (0, 1.8);
	\draw[green!80!black, thick, cap=round] (0, 1.8) .. controls (0.3, 1.4) and (0.7, 2.4) .. (0.6,2.2);
	\draw[olive, thick, cap=round, ->] (0.6,2.2) -- (0.6,1.7);
	\draw[green!80!black, thick, cap=round] (0.6,1.7) .. controls ++(10:0.7) and ++(160:0.6) .. (2,1.9);
	\draw[olive, thick, cap=round, ->] (2,1.9) -- (2,2.5);
	\draw[green!80!black, thick, cap=round] (2,2.5) .. controls ++(0:0.6) and ++(210:0.9) .. (4,2);
	\draw[olive, thick, cap=round, ->] (4,2) -- (4, 2.2);
	
	\draw[dashed, ->] (0, 1.1) -- (0, 1.7);
	\draw[dashed, ->] (0.6, 0.8) -- (0.6, 1.65);
	\draw[dashed, ->] (2, 0.7) -- (2, 1.85);
	\draw[dashed, ->] (4, 0.9) -- (4, 1.95);
	
	\node at (5.6, 2.2) {Hidden state $z$};
	\end{tikzpicture}
	\end{subfigure}
	\hspace{-0.2em}
	\begin{subfigure}[b]{0.49\textwidth}\centering
	\begin{tikzpicture}[scale=0.9]
	\mainpicturedata
	
	\draw[orange!95!black, thick, cap=round] (0, 2) .. controls ++(0:0.2) and ++(150:0.4) .. (1, 1.7);
	\draw[orange!95!black, thick, cap=round] (1, 1.7) .. controls ++(150+180:0.4) and ++ (185:0.3) .. (2, 1.6);
	\draw[orange!95!black, thick, cap=round] (2, 1.6) .. controls ++(185+180:0.3) and ++ (165:0.7) .. (4, 1.8);
	\node at (5.6, 1.8) {Path $X$};
	
	\draw[dashed, ->] (0, 1.1) -- (0, 1.95);
	\draw[dashed, ->] (0.6, 0.8) -- (0.6, 1.8);
	\draw[dashed, ->] (2, 0.7) -- (2, 1.55);
	\draw[dashed, ->] (4, 0.9) -- (4, 1.75);
	
	\draw[dashed, ->] (0, 1.1) -- (0.4, 1.85);
	\draw[dashed, ->] (0.6, 0.8) -- (0.2, 1.9);
	\draw[dashed, ->] (0.6, 0.8) -- (1, 1.65);
	\draw[dashed, ->] (0.6, 0.8) -- (1.4, 1.5);
	\draw[dashed, ->] (0.6, 0.8) -- (1.8, 1.5);
	\draw[dashed, ->] (2, 0.7) -- (1.2, 1.55);
	\draw[dashed, ->] (2, 0.7) -- (1.6, 1.5);
	\draw[dashed, ->] (2, 0.7) -- (2.4, 1.6);
	\draw[dashed, ->] (4, 0.9) -- (3.6, 1.8);
	
	\draw[green!80!black, thick, cap=round] (0, 3) .. controls ++(-30:0.4) and ++(210:0.5) .. (1, 3);
	\draw[green!80!black, thick, cap=round] (1, 3) .. controls ++(210+180:0.5) and ++(180:0.7) .. (2.5, 3.5);
	\draw[green!80!black, thick, cap=round] (2.5, 3.5) .. controls ++(0:0.7) and ++(150:0.5) .. (4, 3);
	
	\draw[dashed, ->] (0, 2.05) -- (0, 2.95);
	\draw[dashed, ->] (0.2,2) -- (0.2,2.85);
	\draw[dashed, ->] (0.4,1.95) -- (0.4,2.8);
	\draw[dashed, ->] (0.6,1.9) -- (0.6,2.8);
	\draw[dashed, ->] (0.8,1.85) -- (0.8,2.85);
	\draw[dashed, ->] (1,1.75) -- (1,2.95);
	\draw[dashed, ->] (1.2, 1.7) -- (1.2, 3.05);
	\draw[dashed, ->] (1.4, 1.6) -- (1.4, 3.15);
	\draw[dashed, ->] (1.6, 1.6) -- (1.6, 3.25);
	\draw[dashed, ->] (1.8, 1.6) -- (1.8, 3.35);
	\draw[dashed, ->] (2, 1.65) -- (2, 3.4);
	\draw[dashed, ->] (2.2, 1.7) -- (2.2, 3.45);
	\draw[dashed, ->] (2.4, 1.75) -- (2.4, 3.45);
	\draw[dashed, ->] (3.6, 1.9) -- (3.6, 3.2);
	\draw[dashed, ->] (3.8, 1.9) -- (3.8, 3.05);
	\draw[dashed, ->] (4, 1.85) -- (4, 2.95);
	\node at (5.6,3) {Hidden state $z$};
	\end{tikzpicture}
	\end{subfigure}
	\caption{Some data process is observed at times $t_1, \ldots, t_n$ to give observations $x_1, \ldots, x_n$. It is otherwise unobserved. \textbf{Left:} Previous work has typically modified hidden state at each observation, and perhaps continuously evolved the hidden state between observations. \textbf{Right: } In contrast, the hidden state of the Neural CDE model has continuous dependence on the observed data.}\label{figure:picture}
	\end{figure}
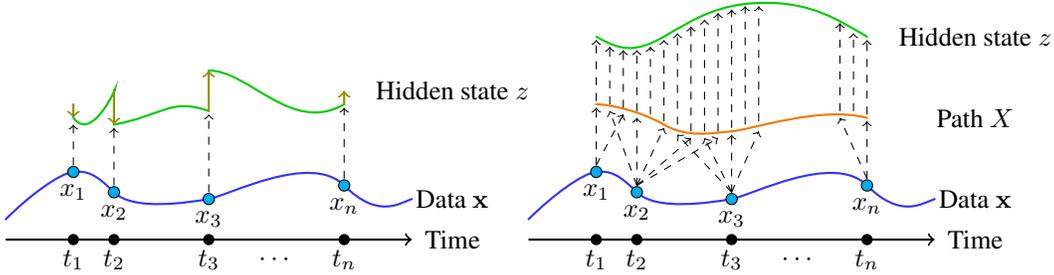
	
	\subsection{Universal Approximation}
	It is a famous theorem in CDEs that in some sense they represent general functions on streams \cite[Theorem 4.2]{perezarribas2018}, \cite[Proposition A.6]{kidger2019deep}. This may be applied to show that Neural CDEs are universal approximators, which we summarise in the following informal statement.
	
	\begin{thm-informal}
	The action of a linear map on the terminal value of a Neural CDE is a universal approximator from $\{\text{\emph{sequences in }} \reals^v\}$ to $\reals$.
	\end{thm-informal}
	
	Theorem \ref{thm:datawise-universal-approximation} in Appendix \ref{appendix:universal} gives a formal statement and a proof, which is somewhat technical. The essential idea is that CDEs may be used to approximate bases of functions on path space.
	
	\subsection{Evaluating the Neural CDE model}\label{subsection:evaluate}
	Evaluating the Neural CDE model is straightforward. In our formulation above, $X$ is in fact not just of bounded variation but is differentiable. In this case, we may define
	\begin{equation}\label{eq:g-theta-x}
	g_{\theta, X}(z, s) = f_\theta(z) \frac{\mathrm{d}X}{\mathrm{d}s}(s),
	\end{equation}
	so that for $t \in (t_0, t_n]$,
	\begin{equation}\label{eq:ncde-to-node}
	z_t = z_{t_0} + \int_{t_0}^t f_\theta(z_s) \mathrm{d} X_s = z_{t_0} + \int_{t_0}^t f_\theta(z_s) \frac{\mathrm{d}X}{\mathrm{d}s}(s) \mathrm{d}s = z_{t_0} + \int_{t_0}^t g_{\theta, X}(z_s, s) \mathrm{d}s.
	\end{equation}
	
	Thus it is possible to solve the Neural CDE using the same techniques as for Neural ODEs. 
	In our experiments, we were able to straightforwardly use the already-existing \texttt{torchdiffeq} package \cite{torchdiffeq} without modification.
	
	\subsection{Comparison to alternative ODE models}\label{section:contrnn}
	For the reader not familiar with CDEs, it might instead seem more natural to replace $g_{\theta, X}$ with some $h_\theta(z, X_s)$ that is directly applied to and potentially nonlinear in $X_s$. Indeed, such approaches have been suggested before, in particular to derive a ``GRU-ODE'' analogous to a GRU \cite{gru-ode-bayes, gruode}.
	
	However, it turns out that something is lost by doing so, which we summarise in the following statement.
	
	\begin{thm-informal}
	Any equation of the form $z_t = z_0 + \int_{t_0}^t h_\theta(z_s, X_s) \mathrm{d}s$ may be represented exactly by a Neural CDE of the form $z_t = z_0 + \int_{t_0}^t f_\theta(z_s) \mathrm{d}X_s$. However the converse statement is not true.
	\end{thm-informal}
	
	Theorem \ref{cont-rnn-thm} in Appendix \ref{appendix:nonlinear-g} provides the formal statement and proof. The essential idea is that a Neural CDE can easily represent the identity function between paths, whilst the alternative cannot.
	
	In our experiments, we find that the Neural CDE substantially outperforms the GRU-ODE, which we speculate is a consequence of this result.
	
	\subsection{Training via the adjoint method}\label{subsection:train}
	An attractive part of Neural ODEs is the ability to train via adjoint backpropagation, see \cite{neural-odes, pontryagin, Giles2000, Hager2000}, which uses only $\bigO(H)$ memory in the time horizon $L = t_n - t_0$ and the memory footprint $H$ of the vector field. This is contrast to directly backpropagating through the operations of an ODE solver, which requires $\bigO(LH)$ memory.
	
	Previous work on Neural ODEs for time series, for example \cite{latent-odes}, has interrupted the ODE to make updates at each observation. Adjoint-based backpropagation cannot be performed across the jump, so this once again requires $\bigO(LH)$ memory. 
	
	In contrast, the $g_{\theta, X}$ defined by equation \eqref{eq:g-theta-x} continuously incorporates incoming data, without interrupting the differential equation, and so adjoint backpropagation may be performed. This requires only $\bigO(H)$ memory. The underlying data unavoidably uses an additional $\bigO(L)$ memory. Thus training the Neural CDE has an overall memory footprint of just $\bigO(L + H)$.	
	
	We do remark that the adjoint method should be used with care, as some systems are not stable to evaluate in both the forward and backward directions \cite{anode, anode2}. The problem of finite-time blow-up is at least not a concern, given global Lipschitz conditions on the vector field \cite[Theorem 1.3]{levy-lyons}. Such a condition will be satisfied if $f_\theta$ uses ReLU or tanh nonlinearities, for example.
	
	\subsection{Intensity as a channel}\label{section:observational-intensity}
	It has been observed that the frequency of observations may carry information \cite{Che2018}. For example, doctors may take more frequent measurements of patients they believe to be at greater risk. Some previous work has for example incorporated this information by learning an intensity function \cite{interpolation-prediction, latent-odes, neural-odes}.
	
	We instead present a simple \emph{non-learnt} procedure, that is compatible with Neural CDEs. Simply concatenate the index $i$ of $x_i$ together with $x_i$, and then construct a path $X$ from the pair $(i, x_i)$, as before. The channel of $X$ corresponding to these indices then corresponds to the \emph{cumulative} intensity of observations.
	
	As the derivative of $X$ is what is then used when evaluating the Neural CDE model, as in equation \eqref{eq:ncde-to-node}, then it is the intensity itself that then determines the vector field.
	
	\subsection{Partially observed data}\label{section:partial}
	One advantage of our formulation is that it naturally adapts to the case of partially observed data. Each channel may independently be interpolated between observations to define $X$ in exactly the same manner as before.
	
	In this case, the procedure for measuring observational intensity in Section \ref{section:observational-intensity} may be adjusted by instead having a separate observational intensity channel $c_i$ for each original channel $o_i$, such that $c_i$ increments every time an observation is made in $o_i$.
	
	\subsection{Batching}
	Given a batch of training samples with observation times drawn from the same interval $[t_0, t_n]$, we may interpolate each $\mathbf{x}$ to produce a continuous $X$, as already described. Each path $X$ is what may then be batched together, regardless of whether the underlying data is irregularly sampled or partially observed. Batching is thus efficient for the Neural CDE model.
	
	\section{Experiments}	
	We benchmark the Neural CDE against a variety of existing models.
	
	These are: GRU-$\Delta$t, which is a GRU with the time difference between observations additionally used as an input; GRU-D \cite{Che2018}, which modifies the GRU-$\Delta$t with learnt exponential decays between observations; GRU-ODE \cite{gru-ode-bayes, gruode}, which is an ODE analogous to the operation of a GRU and uses $X$ as its input; ODE-RNN \cite{latent-odes}, which is a GRU-$\Delta$t model which additionally applies a learnt Neural ODE to the hidden state between observations. Every model then used a learnt linear map from the final hidden state to the output, and was trained with cross entropy or binary cross entropy loss.
	
	The GRU-$\Delta$t represents a straightforward baseline, the GRU-ODE is an alternative ODE model that is thematically similar to a Neural CDE, and the GRU-D and ODE-RNNs are state-of-the-art models for these types of problems. To avoid unreasonably extensive comparisons we have chosen to focus on demonstrating superiority within the class of ODE and RNN based models to which the Neural CDE belongs. These models were selected to collectively be representative of this class.
	
	
	Each model is run five times, and we report the mean and standard deviation of the test metrics.
	
	For every problem, the hyperparameters were chosen by performing a grid search to optimise the performance of the baseline ODE-RNN model. Equivalent hyperparameters were then used for every other model, adjusted slightly so that every model has a comparable number of parameters.
	
	Precise experimental details may be found in Appendix \ref{appendix:experimental-details}, regarding normalisation, architectures, activation functions, optimisation, hyperparameters, regularisation, and so on.
	
	\subsection{Varying amounts of missing data on CharacterTrajectories}
	We begin by demonstrating the efficacy of Neural CDEs on irregularly sampled time series.
	
	To do this, we consider the CharacterTrajectories dataset from the UEA time series classification archive \cite{uea}. This is a dataset of 2858 time series, each of length 182, consisting of the $x, y$ position and pen tip force whilst writing a Latin alphabet character in a single stroke. The goal is to classify which of 20 different characters are written.
	
	We run three experiments, in which we drop either 30\%, 50\% or 70\% of the data. The observations to drop are selected uniformly at random and independently for each time series. Observations are removed across channels, so that the resulting dataset is irregularly sampled but completely observed. The randomly removed data is the same for every model and every repeat.
	
	The results are shown in Table \ref{table:charactertrajectories}. The Neural CDE outperforms every other model considered, and furthermore it does so whilst using an order of magnitude less memory. The GRU-ODE does consistently poorly despite being the most theoretically similar model to a Neural CDE. Furthermore we see that even as the fraction of dropped data increases, the performance of the Neural CDE remains roughly constant, whilst the other models all start to decrease.
	
	Further experimental details may be found in Appendix \ref{appendix:uea}.
	
	\begin{table}\centering
	\caption{Test accuracy (mean $\pm$ std, computed across five runs) and memory usage on CharacterTrajectories. Memory usage is independent of repeats and of amount of data dropped.}\label{table:charactertrajectories}
	\begin{tabular}{@{}lcccc@{}}
	\toprule
	\multirow{2}{*}{Model} & \multicolumn{3}{c}{Test Accuracy} & \multirow{2}{*}{\pbox{20cm}{Memory \\ usage (MB)}} \\ \cmidrule(l){2-4} 
                                            & 30\% dropped & 50\% dropped & 70\% dropped \\ \midrule
	GRU-ODE & \score{92.6}{1.6} & \score{86.7}{3.9} & \score{89.9}{3.7} & 1.5 \\
	GRU-$\Delta$t & \score{93.6}{2.0} & \score{91.3}{2.1} & \score{90.4}{0.8} & 15.8\\
	GRU-D & \score{94.2}{2.1} & \score{90.2}{4.8} & \score{91.9}{1.7} & 17.0 \\
	ODE-RNN & \score{95.4}{0.6} & \score{96.0}{0.3} & \score{95.3}{0.6} & 14.8 \\[0.1ex] \hdashline\noalign{\vskip 0.7ex}
        Neural CDE (ours) & \bfscore{98.7}{0.8} & \bfscore{98.8}{0.2} & \bfscore{98.6}{0.4} & \textbf{1.3} \\ \bottomrule
	\end{tabular}
	\end{table}
	
	\subsection{Observational intensity with PhysioNet sepsis prediction}
	Next we consider a dataset that is both irregularly sampled and partially observed, and investigate the benefits of observational intensity as discussed in Sections \ref{section:observational-intensity} and \ref{section:partial}.
	
	We use data from the PhysioNet 2019 challenge on sepsis prediction \cite{sepsis1, sepsis2}. This is a dataset of 40335 time series of variable length, describing the stay of patients within an ICU. Measurements are made of 5 static features such as age, and 34 time-dependent features such as respiration rate or creatinine concentration in the blood, down to an hourly resolution. Most values are missing; only $10.3\%$ of values are observed. We consider the first 72 hours of a patient's stay, and consider the binary classification problem of predicting whether they develop sepsis over the course of their entire stay (which is as long as a month for some patients).
	
	We run two experiments, one with observational intensity, and one without. For the Neural CDE and GRU-ODE models, observational intensity is continuous and on a per-channel basis as described in Section \ref{section:partial}. For the ODE-RNN, GRU-D, and GRU-$\Delta$t models, observational intensity is given by appending an observed/not-observed mask to the input at each observation.\footnote{As our proposed observational intensity goes via a derivative, these each contain the same information.}\footnote{Note that the ODE-RNN, GRU-D and GRU-$\Delta$t models always receive the time difference between observations, $\Delta$t, as an input. Thus even in the no observational intensity case, they remain aware of the irregular sampling of the data, and so this case not completely fair to the Neural CDE and GRU-ODE models.}  The initial hidden state of every model is taken to be a function (a small single hidden layer neural network) of the static features.
	
	The results are shown in Table \ref{table:sepsis}. As the dataset is highly imbalanced (5\% positive rate), we report AUC rather than accuracy. When observational intensity is used, then the Neural CDE produces the best AUC overall, although the ODE-RNN and GRU-$\Delta$t models both perform well. The GRU-ODE continues to perform poorly.
	
	Without observational intensity then every model performs substantially worse, and in particular we see that the benefit of including observational intensity is particularly dramatic for the Neural CDE.
	
	As before, the Neural CDE remains the most memory-efficient model considered.
	
	\begin{table}
	\centering
	\caption{Test AUC (mean $\pm$ std, computed across five runs) and memory usage on PhysioNet sepsis prediction. `OI' refers to the inclusion of observational intensity, `No OI' means without it. Memory usage is independent of repeats.}\label{table:sepsis}
	\begin{tabular}{@{}lcccccc@{}}
	\toprule
	\multirow{2}{*}{Model} & \multicolumn{2}{c}{Test AUC} & \multicolumn{2}{c}{Memory usage (MB)}\\ \cmidrule(l){2-5} 
                                            & OI & No OI & OI & No OI \\ \midrule
	GRU-ODE & \aucscore{0.852}{0.010} & \aucscore{0.771}{0.024} & 454 & 273 \\
	GRU-$\Delta$t & \aucscore{0.878}{0.006} & \aucscore{0.840}{0.007} & 837 & 826 \\
	GRU-D & \aucscore{0.871}{0.022} & \bfaucscore{0.850}{0.013} & 889 & 878 \\
	ODE-RNN & \aucscore{0.874}{0.016} & \aucscore{0.833}{0.020} & 696 & 686 \\[0.1ex] \hdashline\noalign{\vskip 0.7ex}
        Neural CDE (ours) & \bfaucscore{0.880}{0.006} & \aucscore{0.776}{0.009} & \textbf{244} & \textbf{122} \\ \bottomrule
	\end{tabular}
	\end{table}
	
	Further experimental details can be found in Appendix \ref{appendix:sepsis}.
	
	\subsection{Regular time series with Speech Commands}
	Finally we demonstrate the efficacy of Neural CDE models on regularly spaced, fully observed time series, where we might hypothesise that the baseline models will do better.
	
	We used the Speech Commands dataset \cite{speechcommands}. This consists of one-second audio recordings of both background noise and spoken words such as `left', `right', and so on. We used 34975 time series corresponding to ten spoken words so as to produce a balanced classification problem. 
	We preprocess the dataset by computing mel-frequency cepstrum coefficients so that each time series is then regularly spaced with length 161 and 20 channels.
	
	\begin{wraptable}{l}{75mm}\centering
	\vspace{-0.4em}
	\caption{Test Accuracy (mean $\pm$ std, computed across five runs) and memory usage on Speech Commands. Memory usage is independent of repeats.}\label{table:speechcommands}
	\begin{tabular}{@{}lcc@{}}
	\toprule
	Model & Test Accuracy & \pbox{20cm}{Memory \\ usage (GB)}\\ \midrule
	GRU-ODE & \score{47.9}{2.9} & \textbf{0.164} \\
	GRU-$\Delta$t & \score{43.3}{33.9} & 1.54 \\
	GRU-D & \score{32.4}{34.8} & 1.64 \\
	ODE-RNN & \score{65.9}{35.6} & 1.40\\[0.1ex] \hdashline\noalign{\vskip 0.7ex}
        Neural CDE (ours) & \bfscore{89.8}{2.5} & 0.167 \\ \bottomrule
	\end{tabular}
	\vspace{-0.5em}
	\end{wraptable}
		
	The results are shown in Table \ref{table:speechcommands}. We observed that the Neural CDE had the highest performance, whilst using very little memory. The GRU-ODE consistently failed to perform. The other benchmark models surprised us by exhibiting a large variance on this problem, due to sometimes failing to train, and we were unable to resolve this by tweaking the optimiser. The best GRU-$\Delta$t, \mbox{GRU-D} and ODE-RNN models did match the performance of the Neural CDE, suggesting that on a regularly spaced problem all approaches can be made to work equally well.
	
	In contrast, the Neural CDE model produced consistently good results every time. Anecdotally this aligns with what we observed over the course of all of our experiments, which is that the Neural CDE model usually trained quickly, and was robust to choice of optimisation hyperparameters. We stress that we did not perform a formal investigation of this phenomenen.
	
%
	
	
	Further experimental details can be found in Appendix \ref{appendix:speechcommands}.
	
	\section{Related work}
	In \cite{latent-odes, gru-ode-bayes} the authors consider interrupting a Neural ODE with updates from a recurrent cell at each observation, and were in fact the inspiration for this paper. Earlier work \cite{Che2018, BRITS, NeuralHawkes, discreteevent} use intra-observation exponential decays, which are a special case. \cite{njsde} consider something similar by interrupting a Neural ODE with stochastic events.
	
	SDEs and CDEs are closely related, and several authors have introduced Neural SDEs. \cite{josef, nsde-generative, stochasticDNF} treat them as generative models for time series and seek to model the data distribution. \cite{nsde-normalisation, stochasticnode} investigate using stochasticity as a regularizer, and demonstrate better performance by doing so. \cite{stochasticvectorfield} use random vector fields so as to promote simpler trajectories, but do not use the `SDE' terminology.
	
	Adjoint backpropagation needs some work to apply to SDEs, and so \cite{scalable-sde, nsde-basic, roughstochasticNF} all propose methods for training Neural SDEs. We would particularly like to highlight the elegant approach of \cite{roughstochasticNF}, who use the pathwise treatment given by rough analysis to approximate Brownian noise, and thus produce a random Neural ODE which may be trained in the usual way; such approaches may also avoid the poor convergence rates of SDE solvers as compared to ODE solvers.

	Other elements of the theory of rough analysis and CDEs have also found machine learning applications. Amongst others, \cite{kidger2019deep, primer2016, PerezArribas2018different, fermanian2019embedding, morrill2019sepsis, jeremythesis, pathsig, generalisedsignature} discuss applications of the signature transform to time series problems, and \cite{logsig-rnn} investigate the related logsignature transform. \cite{kiraly2019kernels} develop a kernel for time series using this methodology, and \cite{toth2019gp} apply this kernel to Gaussian processes. \cite{signatory} develop software for these approaches tailored for machine learning.
	
	There has been a range of work seeking to improve Neural ODEs. \cite{Quaglino2020SNODE, trainingnode} investigate speed-ups to the training proecedure, \cite{stable-neural-flows} develop an energy-based Neural ODE framework, and \cite{anode} demonstrate potential pitfalls with adjoint backpropagation. \cite{anode2, dissecting} consider ways to vary the network parameters over time. \cite{trainingnode, robust-node} consider how a Neural ODE model may be regularised (see also the stochastic regularisation discussed above). This provides a wide variety of techniques, and we are hopeful that some of them may additionally carry over to the Neural CDE case.
	
	\section{Discussion}
	\subsection{Considerations}
	There are two key elements of the Neural CDE construction which are subtle, but important.

	\boldheading{Time as a channel}
	CDEs exhibit a \emph{tree-like invariance} property \cite{hambly2010uniqueness}. What this means, roughly, is that a CDE is blind to speed at which $X$ is traversed. Thus merely setting $X_{t_i} = x_i$ would not be enough, as time information is only incorporated via the parameterisation. This is why time is explicitly included as a channel via $X_{t_i} = (x_i, t_i)$.
	
	\boldheading{Initial value networks}
	The initial hidden state $z_{t_0}$ should depend on $X_{t_0} = (x_0, t_0)$. Otherwise, the Neural CDE will depend upon $X$ only through its derivative $\ttfrac{\mathrm{d}X}{\mathrm{d}t}$, and so will be translationally invariant. An alternative would be to append another channel whose first derivative includes translation-sensitive information, for example by setting $X_{t_i} = (x_i, t_i, t_i x_0)$.
	
	\subsection{Performance tricks}
	We make certain (somewhat anecdotal) observations of tricks that seemed to help performance.
	
	\boldheading{Final tanh nonlinearity}
	We found it beneficial to use a tanh as a final nonlinearity for the vector field $f_\theta$ of a Neural CDE model. Doing so helps prevent extremely large initial losses, as the tanh constrains the rate of change of the hidden state. This is analogous to RNNs, where the key feature of GRUs and LSTMs are procedures to constrain the rate of change of the hidden state.
	
	\boldheading{Layer-wise learning rates}
	We found it beneficial to use a larger ($\times$10--100) learning rate for the linear layer on top of the output of the Neural CDE, than for the vector field $f_\theta$ of the Neural CDE itself. This was inspired by the observation that the final linear layer has (in isolation) only a convex optimisation problem to solve.\footnote{In our experiments we applied this learning rate to the linear layer on top of every model, not the just the Neural CDE, to ensure a fair comparison.}
	
	
	\subsection{Limitations} \label{section:limitations}
	\boldheading{Speed of computation}
	We found that Neural CDEs were typically slightly faster to compute than the ODE-RNN model of \cite{latent-odes}. (This is likely to be because in an Neural CDE, steps of the numerical solver can be made across observations, whilst the ODE-RNN must interrupt its solve at each observation.)
	
	However, Neural CDEs were still roughly fives times slower than RNN models. We believe this is largely an implementation issue, as the implementation via \texttt{torchdiffeq} is in Python, and by default uses double-precision arithmetic with variable step size solvers, which we suspect is unnecessary for most practical tasks.
	
	\boldheading{Number of parameters}
	If the vector field $f_\theta \colon \reals^w \to \reals^{w \times (v + 1)}$ is a feedforward neural network, with final hidden layer of size $\omega$, then the number of scalars for the final affine transformation is of size $\bigO(\omega v w)$, which can easily be very large. In our experiments we have to choose small values of $w$ and $\omega$ for the Neural CDE to ensure that the number of parameters is the same across models.
	
	We did experiment with representing the final linear layer as an outer product of transformations $\reals^w \to \reals^w$ and $\reals^w \to \reals^{v + 1}$. This implies that the resulting matrix is rank-one, and reduces the number of parameters to just $\bigO(\omega (v + w))$, but unfortunately we found that this hindered the classification performance of the model.
	
	\subsection{Future work}	
	\boldheading{Vector field design}
	The vector fields $f_\theta$ that we consider are feedforward networks. More sophisticated choices may allow for improved performance, in particular to overcome the trilinearity issue just discussed.
	
	\boldheading{Modelling uncertainty}
	As presented here, Neural CDEs do not give any measure of uncertainty about their predictions. Such extensions are likely to be possible, given the close links between CDEs and SDEs, and existing work on Neural SDEs \cite{josef, nsde-generative, stochasticDNF, nsde-normalisation, stochasticnode, scalable-sde, nsde-basic, roughstochasticNF, imanol2020sde}.
	
	\boldheading{Numerical schemes}
	In this paper we integrated the Neural CDE by reducing it to an ODE. The field of numerical CDEs is relatively small -- to the best of our knowledge \cite{lyons2014rough, logode1, logode2, gyurkothesis, arendthesis, dimensionfree, yang2015roughtaylor, jamesthesis, foster2020odemethod} constitute essentially the entire field, and are largely restricted to rough controls. Other numerical methods may be able to exploit the CDE structure to improve performance.
	
	\boldheading{Choice of $X$} Natural cubic splines were used to construct the path $X$ from the time series $\mathbf{x}$. However, these are not causal. That is, $X_t$ depends upon the value of $x_i$ for $t < t_i$. This makes it infeasible to apply Neural CDEs in real-time settings, for which $X_t$ is needed before $x_i$ has been observed. Resolving this particular issue is a topic on which we have follow-up work planned.
	
	\boldheading{Other problem types}
	Our experiments here involved only classification problems. There was no real reason for this choice, and we expect Neural CDEs to be applicable more broadly.
	
	\subsection{Related theories}	
	\boldheading{Rough path theory}
	The field of rough path theory, which deals with the study of CDEs, is much larger than the small slice that we have used here. It is likely that further applications may serve to improve Neural CDEs. A particular focus of rough path theory is how to treat functions that must be sensitive to the order of events in a particular (continuous) way.
	
	\boldheading{Control theory}
	Despite their similar names, and consideration of similar-looking problems, control theory and controlled differential equations are essentially separate fields. Control theory has clear links and applications that may prove beneficial to models of this type.
	
	\boldheading{RNN theory}
	Neural CDEs may be interpreted as continuous-time versions of RNNs. CDEs thus offer a theoretical construction through which RNNs may perhaps be better understood. Conversely, what is known about RNNs may have applications to improve Neural CDEs.
	
	\section{Conclusion}
	We have introduced a new class of continuous-time time series models, Neural CDEs. Just as Neural ODEs are the continuous analogue of ResNets, the Neural CDE is the continuous time analogue of an RNN. The model has three key advantages: it operates directly on irregularly sampled and partially observed multivariate time series, it demonstrates state-of-the-art performance, and it benefits from memory-efficient adjoint-based backpropagation even across observations. To the best of our knowledge, no other model combines these three features together. We also provide additional theoretical results demonstrating universal approximation, and that Neural CDEs subsume alternative ODE models.
	
	\section*{Broader Impact}
	We have introduced a new tool for studying irregular time series. As with any tool, it may be used in both positive and negative ways. The authors have a particular interest in electronic health records (an important example of irregularly sampled time-stamped data) and so here at least we hope and expect to see a positive impact from this work. We do not expect any specific negative impacts from this work.

	\begin{ack}
	Thanks to Cristopher Salvi for many vigorous discussions on this topic. PK was supported by the EPSRC grant EP/L015811/1. JM was supported by the EPSRC grant EP/L015803/1 in collaboration with Iterex Therapuetics. JF was supported by the EPSRC grant EP/N509711/1. PK, JM, JF, TL were supported by the Alan Turing Institute under the EPSRC grant EP/N510129/1.
	\end{ack}
	
	\small
	\bibliographystyle{ieeetr}
	\bibliography{references} 
	
	\normalsize
	\newpage
	\appendix
	
	\begin{center}
	\huge Supplementary material
	\end{center}
	Appendix \ref{appendix:other-schemes-for-x} discusses the technical considerations of schemes for constructing a path $X$ from data. Appendix \ref{appendix:universal} proves universal approximation of the Neural CDE model, and is substantially more technical than the rest of this paper. Appendix \ref{appendix:nonlinear-g} proves that the Neural CDE model subsumes alternative ODE models which depend directly and nonlinearly on the data. Appendix \ref{appendix:experimental-details} gives the full details of every experiment, such as choice of optimiser, hyperparameter searches, and so on.
	
	\section{Other schemes for constructing the path $X$}\label{appendix:other-schemes-for-x}
	To evaluate the model as discussed in Section \ref{subsection:evaluate}, $X$ must be at least continuous and piecewise differentiable.
	
	\subsection{Differentiating with respect to the time points}
	However, there is a technical caveat in the specific case that derivatives with respect to the initial time $t_0$ are required, and that training is done with the adjoint method. In this case the derivative with respect to $t_0$ is computed using, and thus requires, derivatives of the vector field with respect to $t$.
	
	To be precise, suppose we have a Neural CDE as before:
	\begin{equation*}
	z_t = z_{t_0} + \int_{t_0}^t f_\theta(z_s) \mathrm{d} X_s\quad\text{for $t \in (t_0, t_n]$}.
	\end{equation*}
	Let $L$ be some (for simplicity scalar-valued) function of $z_{t_n}$, for example a loss. Consider
	\begin{equation*}
	g_{\theta, X}(z, s) = f_\theta(z) \frac{\mathrm{d}X}{\mathrm{d}s}(s)
	\end{equation*}
	as before, and let
	\begin{equation*}
	a_s = \frac{\mathrm{d}L}{\mathrm{d}z_s},
	\end{equation*}
	which is vector-valued, with size equal to the size of $z_s$, the number of hidden channels.
	
	Then, applying \cite[Equation 52]{neural-odes} to our case:
	\begin{align}
	\frac{\mathrm{d} L}{\mathrm{d} t_0} &= \frac{\mathrm{d} L}{\mathrm{d} t_n} -\int_{t_n}^{t_0} a_s \cdot \frac{\partial g_{\theta, X}}{\partial s}(z_s, s) \mathrm{d}s\nonumber\\
	&= \frac{\mathrm{d} L}{\mathrm{d} t_n} -\int_{t_n}^{t_0} a_s \cdot f_\theta(z_s) \frac{\mathrm{d}^2 X}{\mathrm{d} s^2}(s) \mathrm{d}s,\label{eq:t-derivative}
	\end{align}
	where $\cdot$ represents the dot product.
	
	In principle we may make sense of equation \eqref{eq:t-derivative} when $\ttfrac{\mathrm{d}^2 X}{\mathrm{d} s^2}$ is merely measure valued, but in practice most code is only set up to handle classical derivatives. If derivatives with respect to $t_0$ are desired, then practically speaking $X$ must be at least twice differentiable.
	
	\subsection{Adaptive step size solvers}
	There is one further caveat that must be considered. Suppose $X$ is twice differentiable, but that the second derivative is discontinuous. For example this would be accomplished by taking $X$ to be a quadratic spline interpolation.
	
	If seeking to solve equation \eqref{eq:t-derivative} with an adaptive step-size solver, we found that the solver would take a long time to compute the backward pass, as it would have to slow down to resolve each jump in $\ttfrac{\mathrm{d}^2 X}{\mathrm{d} s^2}$, and then speed back up in the intervals in-between.
	
	\subsection{Natural cubic splines}
	This is then the reason for our selection of natural cubic splines: by ensuring that $X$ is twice continuously differentiable, the above issue is ameliorated, and adaptive step size solvers operate acceptably. Cubic splines give essentially the minimum regularity for the techniques discussed in this paper to work `out of the box' in all cases.
	
	Other than this smoothness, however, there is little that is special about natural cubic splines. Other possible options are for example Gaussian processes \cite{gp-adapter1, gp-adapter2} or kernel methods \cite{interpolation-prediction}. Furthermore, especially in the case of noisy data it need not be an interpolation scheme -- approximation and curve-fitting schemes are valid too.
	
	\section{Universal Approximation}\label{appendix:universal}
	The behaviour of controlled differential equations are typically described through the \emph{signature transform} (also known as \emph{path signature} or simply \emph{signature}) \cite{levy-lyons} of the driving process $X$. 
	We demonstrate here how a (Neural) CDE may be reduced to consideration of just the signature transform, in order to prove universal approximation.
	
	The proof is essentially split into two parts. The first part is to prove universal approximation with respect to continuous paths, as is typically done for CDEs. The second (lengthier) part is to interpret what this means for the natural cubic splines that we use in this paper, so that we can get universal approximation with respect to the original data as well.
	
	\begin{definition}
	Let $\tau, T \in \reals$ with $\tau < T$ and let $v \in \naturals$. Let $\mathcal{V}^1([\tau, T]; \reals^v)$ represent the space of continuous functions of bounded variation. Equip this space with the norm
	\begin{equation*}
	\widehat{X} \mapsto \norm{\widehat{X}}_{\mathcal{V}} = \norm{\widehat{X}}_{\infty} + \abs{\widehat{X}}_{BV}.
	\end{equation*}
	\end{definition}
	
	This is a somewhat unusual norm to use, as bounded variation seminorms are more closely aligned with $L^1$ norms than $L^\infty$ norms.
	
	\begin{definition}
	For any $\widehat{X} \in \mathcal{V}^1([\tau, T]; \reals^v)$ let $X_t = (\widehat{X}_t, t) \in \mathcal{V}^1([\tau, T]; \reals^{v + 1})$. We choose to use the notation of `removing the hat' to denote this time augmentation, for consistency with the main text which uses $X$ for the time-augmented path.
	\end{definition}
	
	\begin{definition}
	For any $N, v \in \naturals$, let $\kappa(N, v) = \sum_{i = 0}^N (v + 1)^i$.
	\end{definition}
	
	\begin{definition}[Signature transform]\label{definition:signature}
	For any $k \in \naturals$ and any $y \in \reals^k$, let $M(y) \in \reals^{k(v + 1) \times (v + 1)}$ be the matrix
	\begin{equation*}
	M(y) = \begin{bmatrix}
	y^1 & 0 & 0 & \cdots & 0\\
	y^2 & 0 & 0 & \cdots & 0\\
	\vdots & \vdots & \vdots && \vdots\\
	y^k & 0 & 0 & \cdots & 0\\
	0 & y^1 & 0 & \cdots & 0\\
	0 & \vdots & 0 && \vdots\\
	0 & y^k & 0 & \cdots & 0 \\
	&&&\ddots\\
	0 & 0 & 0 & \cdots & y^1\\
	\vdots & \vdots & \vdots & \cdots & \vdots\\
	0 & 0 & 0 & \cdots & y^k\\
	\end{bmatrix}
	\end{equation*}
	
	Fix $N \in \naturals$ and $X \in \mathcal{V}^1([\tau, T]; \reals^{v + 1})$. Let $y^{0, X, N} \colon [\tau, T] \to \reals$ be constant with $y^{0, X, N}_t = 1$.
	
	For all $i \in \{1, \ldots, N\}$, iteratively let $y^{i, X, N} \colon [\tau, T] \to \reals^{(v + 1)^i}$ be the value of the integral
	\begin{equation*}
	y^{i, X, N}_t = y^{i, X, N}_\tau + \int_\tau^t M(y^{i - 1, X, N}_s) \mathrm{d}X_s\quad\text{for $t \in (\tau, T]$,}
	\end{equation*}
	with $y^{i, X, N}_\tau = 0 \in \reals^{(v + 1)^i}$.
	
	Then we may stack these together:
	\begin{align*}
	y^{X, N} = (y^{0, X, N}, \ldots, y^{N, X, N}) &\colon [\tau, T] \to \reals^{\kappa(N, v)},\\
	\widetilde{M}(y^{X, N}) = (0, M \circ y^{0, X, N}, \ldots, M \circ y^{N - 1, X, N}) &\colon [\tau, T] \to \reals^{\kappa(N, v) \times v}.
	\end{align*}
	Then $y^{X, N}$ is the unique solution to the CDE
	\begin{equation*}
	y^{X, N}_t = y^{X, N}_\tau + \int_\tau^t \widetilde{M}(y^{X, N})_s \mathrm{d}X_s\quad\text{for $t \in (\tau, T]$,}
	\end{equation*}
	with $y^{X, N}_\tau = (1, 0, \ldots, 0)$.
	
	Then the \emph{signature transform truncated to depth $N$} is defined as the map
	\begin{align*}
	\mathrm{Sig}^N &\colon \mathcal{V}^1([\tau, T]; \reals^{v + 1}) \to \reals^{\kappa(N, v)},\\
	\mathrm{Sig}^N &\colon X \mapsto y^{X, N}_T.
	\end{align*}
	
	If this seems like a strange definition, then note that for any $a \in \reals^k$ and any $b \in \reals^v$ that $M(a)b$ is equal to a flattened vector corresponding to the outer product $a \otimes b$. As such, the signature CDE is instead typically written much more concisely as the exponential differential equation
	\begin{equation*}
	y^{X, N}_t = y^{X, N}_\tau + \int_\tau^t y^{X, N}_s \otimes \mathrm{d}X_s\quad\text{for $t \in (\tau, T]$},
	\end{equation*}
	however we provide the above presentation for consistency with the rest of the text, which does not introduce $\otimes$.\newline
	\end{definition}
	
	\begin{definition}
	Let $\mathcal{V}^1_0([\tau, T]; \reals^v) = \set{X \in \mathcal{V}^1([\tau, T]; \reals^v)}{X_0 = 0}$.
	\end{definition}

	With these definitions out of the way, we are ready to state the famous universal nonlinearity property of the signature transform. We think \cite[Theorem 4.2]{perezarribas2018} gives the most straightforward proof of this result. This essentially states that the signature gives a basis for the space of functions on compact path space.
	\begin{thm}[Universal nonlinearity]\label{thm:universal-nonlinearity}
	Let $\tau, T \in \reals$ with $\tau < T$ and let $v, u \in \naturals$. 
	
	Let $K \subseteq \mathcal{V}^1_0([\tau, T]; \reals^v)$ be compact. (Note the subscript zero.)
	
	Let $\mathrm{Sig}^N \colon \mathcal{V}^1([\tau, T]; \reals^{v + 1}) \to \reals^{\kappa(N, v)}$ denote the signature transform truncated to depth $N$.
	
	Let
	\begin{equation*}
	J^{N, u} = \set{\ell \colon \reals^{\kappa(N, v)} \to \reals^u}{\ell \text{ is linear}}.
	\end{equation*}
	
	Then
	\begin{equation*}
	\bigcup_{N \in \naturals} \set{\widehat{X} \mapsto \ell(\mathrm{Sig}^N(X))}{\ell \in J^{N, u}}
	\end{equation*}
	is dense in $C(K; \reals^u)$.
	\end{thm}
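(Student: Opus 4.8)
The plan is to deduce this from the Stone--Weierstrass theorem. It suffices to treat the case $u = 1$: for a general target $F \in C(K;\reals^u)$ one approximates each of its $u$ coordinate functions separately and stacks the resulting linear maps, padding each with zeros up to a common truncation depth $N$ (a linear functional of $\mathrm{Sig}^{N'}$ is also one of $\mathrm{Sig}^{N}$ for $N' \le N$, since the lower levels sit inside $\mathrm{Sig}^N$ as a sub-block). So fix $u = 1$ and let
\[
\mathcal{A} = \bigcup_{N \in \naturals} \set{\widehat{X} \mapsto \ell(\mathrm{Sig}^N(X))}{\ell \in J^{N, 1}} \subseteq C(K; \reals).
\]
We must verify: (i) every element of $\mathcal{A}$ really is a continuous function on $K$; (ii) $\mathcal{A}$ is a subalgebra of $C(K;\reals)$ containing the constants; and (iii) $\mathcal{A}$ separates points of $K$. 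Stone--Weierstrass then yields density.

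For (i): the map $\widehat{X} \mapsto X$ merely appends the fixed coordinate $t \mapsto t$ and is therefore continuous from $\mathcal{V}^1_0([\tau,T];\reals^v)$ into $\mathcal{V}^1([\tau,T];\reals^{v+1})$ equipped with $\norm{\cdot}_{\mathcal V}$; and the truncated signature $X \mapsto \mathrm{Sig}^N(X)$ is continuous in this norm. The latter is the classical local-Lipschitz estimate for the signature (equivalently, continuity of the It\^o--Lyons map), obtained by iterating the elementary bound on a Riemann--Stieltjes integral $\int f(z_s)\,\mathrm{d}X_s$ in terms of $\norm{f}_\infty$ and $\abs{X}_{BV}$, and using that $\abs{X}_{BV}$ is bounded over the compact set $K$. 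Composing with the continuous linear map $\ell$ gives (i). For (ii): closure under sums and scalar multiples is immediate after padding to a common depth; closure under products is the \emph{shuffle identity}, whereby the pointwise product of a linear functional of $\mathrm{Sig}^{N_1}$ and one of $\mathrm{Sig}^{N_2}$ equals a single (explicitly shuffle-defined) linear functional of $\mathrm{Sig}^{N_1+N_2}$; and the depth-$0$ component of $y^{X,N}_T$ is the constant $1$ by construction, so all constants lie in $\mathcal{A}$.

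For (iii): suppose $\widehat{X}, \widehat{Y} \in K$ with $\widehat X \neq \widehat Y$. Both are pinned at $0$ and both are time-augmented, $X = (\widehat X, t)$ and $Y = (\widehat Y, t)$, so the last coordinate of each is strictly increasing; hence neither path has any tree-like (backtracking) portion, and $X$ and $Y$ are not tree-like equivalent to one another. By the Hambly--Lyons uniqueness theorem a bounded-variation path is determined up to tree-like equivalence by its signature, so $\mathrm{Sig}(X) \neq \mathrm{Sig}(Y)$; the two signatures then differ already at some finite level $N$, and the linear functional projecting onto that coordinate separates $\widehat X$ from $\widehat Y$ within $\mathcal{A}$.

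The main obstacle is step (iii): one must see precisely why the strictly monotone time channel upgrades the signature from ``injective up to tree-like equivalence'' to genuinely injective on the class at hand, and why both the basepoint-$0$ constraint and the time augmentation are indispensable (the signature is otherwise invariant under translation and under tree-like modifications). Steps (i) and (ii) are routine once the standard continuity estimate and the shuffle identity are invoked. A shortcut to the entire argument is to cite \cite[Theorem 4.2]{perezarribas2018}, whose proof follows exactly this Stone--Weierstrass route.
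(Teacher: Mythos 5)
The paper does not prove this theorem itself but defers to \cite[Theorem 4.2]{perezarribas2018}, whose argument is precisely the Stone--Weierstrass route you describe: the shuffle identity gives the algebra structure, the depth-zero term gives the constants, and point separation follows from uniqueness of the signature for basepointed, time-augmented paths via Hambly--Lyons. Your sketch is correct and takes essentially the same approach as the cited proof, with the point-separation step (which you rightly flag as the delicate part) being the only place requiring genuine care.
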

	
	With the universal nonlinearity property, we can now prove universal approximation of CDEs with respect to controlling paths $X$.
	\begin{thm}[Universal approximation with CDEs]\label{prop:pathwise-universal-approximation}
	Let $\tau, T \in \reals$ with $\tau < T$ and let $v, u \in \naturals$. For any $w \in \naturals$ let
	\begin{align*}
	F^w &= \set{f \colon \reals^w \to \reals^{w \times (v + 1)}}{f \text{ is continuous}},\\
	L^{w, u} &= \set{\ell \colon \reals^w \to \reals^u}{\ell \text{ is linear}},\\
	\xi^w&= \set{\zeta \colon \reals^{v + 1} \to \reals^w}{\zeta \text{ is continuous}}.
	\end{align*}
	
	For any $w \in \naturals$, any $f \in F^w$ and any $\zeta \in \xi^w$ and any $\widehat{X} \in \mathcal{V}^1([\tau, T]; \reals^v)$, let $z^{f, \zeta, X} \colon [\tau, T] \to \reals^w$ be the unique solution to the CDE
	\begin{equation*}
	z^{f, \zeta, X}_t = z^{f, \zeta, X}_\tau + \int_\tau^t f(z^{f, \zeta, X}_s) \mathrm{d}X_s\quad\text{for $t \in (\tau, T]$,}
	\end{equation*}
	with $z^{f, \zeta, X}_\tau = \zeta(X_\tau)$.
	
	Let $K \subseteq \mathcal{V}^1([\tau, T]; \reals^v)$ be compact.
	
	Then
	\begin{equation*}
	\bigcup_{w \in \naturals} \set{\widehat{X} \mapsto \ell(z^{f, \zeta, X}_T)}{f \in F^w, \ell \in L^{w, u}, \zeta \in \xi^w}
	\end{equation*}
	is dense in $C(K; \reals^u)$.
	\end{thm}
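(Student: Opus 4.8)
The plan is to exploit the fact, already visible in Definition \ref{definition:signature}, that the truncated signature transform is itself the terminal value of a CDE whose vector field $\widetilde{M}$ is linear in the hidden state, hence continuous. Concretely, taking $w = \kappa(N, v)$, $f = \widetilde{M} \in F^w$ and $\zeta$ constant equal to $(1, 0, \ldots, 0)$ gives $z^{f, \zeta, X}_T = \mathrm{Sig}^N(X)$, so that $\widehat{X} \mapsto \ell(\mathrm{Sig}^N(X))$ already belongs to the left-hand union for every linear $\ell$. If every admissible $K$ consisted of paths with $X_\tau = 0$ we could therefore just invoke Theorem \ref{thm:universal-nonlinearity} and be done. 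Two things need care: (i) $K$ is an arbitrary compact subset of $\mathcal{V}^1([\tau, T]; \reals^v)$, not of $\mathcal{V}^1_0$; and (ii) the readout $\ell$ must be linear, so all nonlinearity in a target $\alpha \in C(K; \reals^u)$ must be produced by the (nonlinear) signature map.

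Both issues are handled at once by \emph{prepending a straight line}. Given $\widehat{X} \in \mathcal{V}^1([\tau, T]; \reals^v)$, define $\widehat{Y} \in \mathcal{V}^1_0([\tau - 1, T]; \reals^v)$ by letting $\widehat{Y}$ run linearly from $0$ to $\widehat{X}_\tau$ on $[\tau - 1, \tau]$ and agree with $\widehat{X}$ on $[\tau, T]$; after time augmentation, $Y$ is the concatenation of a line segment with $X$. The map $\widehat{X} \mapsto \widehat{Y}$ is Lipschitz for the $\mathcal{V}$-norm (the prepended segment adds $\abs{\widehat{X}_\tau}$ to the bounded-variation seminorm and at most $\abs{\widehat{X}_\tau}$ to the supremum norm) and injective, so $K^Y := \set{\widehat{Y}}{\widehat{X} \in K}$ is a compact subset of $\mathcal{V}^1_0([\tau - 1, T]; \reals^v)$ and $\widehat{X} \mapsto \widehat{Y}$ is a homeomorphism onto $K^Y$. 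By Chen's identity for truncated signatures --- equivalently, by the fact that running the signature CDE from the initial value $\mathrm{Sig}^N$ of a path ending where $X$ begins returns $\mathrm{Sig}^N$ of the concatenation --- one gets $\mathrm{Sig}^N(Y) = z^{\widetilde{M}, \zeta, X}_T$, where now $\zeta(\widehat{x}, t)$ is the depth-$N$ truncated signature of the straight line from $(0, \tau - 1)$ to $(\widehat{x}, \tau)$ (ignoring the $t$ argument since $\tau$ is fixed). This $\zeta$ is polynomial in $\widehat{x}$, hence continuous, so $\zeta \in \xi^w$ with $w = \kappa(N, v)$.

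It then remains to assemble the pieces. Fix $\alpha \in C(K; \reals^u)$ and $\varepsilon > 0$. Pushing $\alpha$ through the homeomorphism defines $\beta \in C(K^Y; \reals^u)$ with $\beta(\widehat{Y}) = \alpha(\widehat{X})$. Applying Theorem \ref{thm:universal-nonlinearity} on the interval $[\tau - 1, T]$ to the compact set $K^Y$ yields $N \in \naturals$ and a linear $\ell \colon \reals^{\kappa(N, v)} \to \reals^u$ with $\sup_{\widehat{Y} \in K^Y} \norm{\ell(\mathrm{Sig}^N(Y)) - \beta(\widehat{Y})} < \varepsilon$, which is precisely $\sup_{\widehat{X} \in K} \norm{\ell(z^{\widetilde{M}, \zeta, X}_T) - \alpha(\widehat{X})} < \varepsilon$. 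Since $(\widetilde{M}, \ell, \zeta)$ realise an element of the left-hand union with $w = \kappa(N, v)$, density follows.

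I expect the main obstacle to be the bookkeeping in step (ii): verifying carefully that the signature CDE started from $\mathrm{Sig}^N$ of the prepended segment reproduces $\mathrm{Sig}^N$ of the concatenated path (the truncation is consistent precisely because the rate of change of the degree-$i$ component of $y$ depends only on the degree-$(i-1)$ component, so degrees $> N$ never feed back in), that replacing $[\tau, T]$ by $[\tau - 1, T]$ in Theorem \ref{thm:universal-nonlinearity} is harmless, and that the continuity and compactness claims for $\widehat{X} \mapsto \widehat{Y}$ genuinely hold in the slightly unusual $\mathcal{V}$-norm. None of this is deep, but it is where the argument could most easily go wrong.
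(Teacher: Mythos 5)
Your proposal is correct and follows essentially the same route as the paper's proof: prepend a straight-line segment to land in $\mathcal{V}^1_0$, invoke the universal nonlinearity of the truncated signature on the resulting compact set, realise $\mathrm{Sig}^N$ as the terminal value of the linear signature CDE, and absorb the prepended segment into the initial condition $\zeta$ (the signature of the straight line, continuous in $X_\tau$), transferring density back through the homeomorphism $\widehat{X} \mapsto \widehat{X}^*$. Your explicit checks that $\zeta$ is polynomial in $\widehat{x}$ and that the prepending map is Lipschitz for $\norm{\cdot}_{\mathcal{V}}$ are slightly more careful than the paper's corresponding assertions, but the argument is the same.
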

	\begin{proof}
	We begin by prepending a straight line segment to every element of $K$. For every $\widehat{X} \in K$, define $\widehat{X}^* \colon [\tau - 1, T] \to \reals^v$ by
	\begin{equation*}
	\widehat{X}^*_t = \begin{cases}
	(t - \tau + 1)\widehat{X}_\tau & t \in [\tau -1 , \tau),\\
	\widehat{X}_t & t \in [\tau, T].
	\end{cases}
	\end{equation*}
	Similarly define $X^*$, so that a hat means that time is \emph{not} a channel, whilst a star means that an extra straight-line segment has been prepended to the path.
	
	Now let $K^* = \set{\widehat{X}^*}{\widehat{X} \in K}$. Then $K^* \subseteq \mathcal{V}^1_0([\tau - 1, T]; \reals^v)$ and is also compact. Therefore by Theorem \ref{thm:universal-nonlinearity},
	\begin{equation*}
	\bigcup_{N \in \naturals} \set{\widehat{X}^* \mapsto \ell(\mathrm{Sig}^N(X^*))}{\ell \in J^{N, u}}
	\end{equation*}
	is dense in $C(K^*; \reals^u)$.
	
	So let $\alpha \in C(K; \reals^u)$ and $\varepsilon > 0$. The map $\widehat{X} \mapsto \widehat{X}^*$ is a homeomorphism, so we may find $\beta \in C(K^*; \reals^u)$ such that $\beta(\widehat{X}^*) = \alpha(\widehat{X})$ for all $\widehat{X} \in K$. Next, there exists some $N \in \naturals$ and $\ell \in J^{N, u}$ such that $\gamma$ defined by $\gamma \colon \widehat{X}^* \mapsto \ell(\mathrm{Sig}^N(X^*))$ is $\varepsilon$-close to $\beta$.
	
	By Definition \ref{definition:signature} there exists $f \in F^{\kappa(N, v)}$ so that $\mathrm{Sig}^N(X^*) = y^{X^*}_T$ for all $X^* \in K^*$, where $y^{X^*}$ is the unique solution of the CDE
	\begin{equation*}
	y^{X^*}_t = y^{X^*}_{\tau - 1} + \int_{\tau - 1}^t f(y^{X^*}_s) \mathrm{d}X^*_s\quad\text{for $t \in (\tau - 1, T]$,}
	\end{equation*}
	with $y^{X^*}_{\tau - 1} = (1, 0, \ldots, 0)$.
	
	Now let $\zeta \in \xi$ be defined by $\zeta(X_\tau) = y^{X^*}_\tau$, which we note is well defined because the value of $y^{X^*}_t$ only depends on $X_\tau$ for $t \in [\tau - 1, \tau]$.
	
	Now for any $X \in K$ let $z^{X} \colon [\tau, T] \to \reals^w$ be the unique solution to the CDE
	\begin{equation*}
	z^{X}_t = z^{X}_\tau + \int_\tau^t f(z^{X}_s) \mathrm{d}X_s\quad\text{for $t \in (\tau, T]$,}
	\end{equation*}
	with $z^{X}_\tau = \zeta(X_\tau)$.
	
	Then by uniqueness of solution, $z^{X}_t = y^{X^*}_t$ for $t \in [\tau, T]$, and so in particular $\mathrm{Sig}^N(X^*) = y^{X^*}_T = z^{X}_T$.
	
	Finally it remains to note that $\ell \in J^{N, u} = L^{\kappa(N, v), u}$.
	
	So let $\delta$ be defined by $\delta \colon \widehat{X} \mapsto \ell(z^X_T)$. Then $\delta$ is in the set which we were aiming to show density of (with $w = \kappa(N, v)$, $f \in F^w$, $\ell \in L^{w, u}$ and $\zeta \in \xi$ as chosen above), whilst for all $\widehat{X} \in K$,
	\begin{equation*}
	\delta(\widehat{X}) = \ell(z^X_T) = \ell(\mathrm{Sig}^N(X^*)) = \gamma(\widehat{X}^*)
	\end{equation*}
	is $\varepsilon$-close to $\beta(\widehat{X}^*) = \alpha(\widehat{X})$. Thus density has been established.
	\end{proof}
	
	\begin{lemma}\label{lemma:compact}
	Let $K \subseteq C^2([\tau, T], \reals^v)$ be uniformly bounded with uniformly bounded first and second derivatives. That is, there exists some $C > 0$ such that $\norm{\widehat{X}}_\infty \!\!+ \norm{\ttfrac{\mathrm{d}\widehat{X}}{\mathrm{d}t}}_\infty \!+ \norm{\ttfrac{\mathrm{d}^2\widehat{X}}{\mathrm{d}t^2}}_\infty< C$ for all $\widehat{X} \in K$. Then $K \subseteq \mathcal{V}^1([\tau, T]; \reals^v)$ and is relatively compact (that is, its closure is compact) with respect to $\norm{\,\cdot\,}_\mathcal{V}$.
	\end{lemma}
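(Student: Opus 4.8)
The plan is to combine two standard ingredients: that a $C^1$ path on a compact interval has total variation bounded by the length of the interval times the supremum of its derivative, and a double application of the Arzelà--Ascoli theorem --- once to the paths, once to their first derivatives. The second-derivative bound in the hypothesis is there precisely to make the first derivatives precompact in the uniform norm.

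First I would verify the containment. Each $\widehat{X} \in K$ is continuous since it is $C^2$, and for any partition $\tau = s_0 < \cdots < s_m = T$,
\[
\sum_{j} \abs{\widehat{X}_{s_{j+1}} - \widehat{X}_{s_j}} = \sum_j \abs{\int_{s_j}^{s_{j+1}} \frac{\mathrm{d}\widehat{X}}{\mathrm{d}t}(t)\,\mathrm{d}t} \le (T - \tau)\norm{\frac{\mathrm{d}\widehat{X}}{\mathrm{d}t}}_\infty,
\]
so $\abs{\widehat{X}}_{BV} \le (T - \tau) C$ and hence $\widehat{X} \in \mathcal{V}^1([\tau, T]; \reals^v)$ with $\norm{\widehat{X}}_{\mathcal{V}} \le (1 + T - \tau)C$. (Boundedness in norm does not by itself give compactness in infinite dimensions, so this is only a preliminary check.)

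Next, for relative compactness I would take an arbitrary sequence $(\widehat{X}_n)_n$ in $K$ and extract a subsequence convergent in $\norm{\,\cdot\,}_{\mathcal{V}}$. Since $\norm{\widehat{X}_n}_\infty < C$ and the uniform bound on $\frac{\mathrm{d}\widehat{X}_n}{\mathrm{d}t}$ makes the $\widehat{X}_n$ uniformly $C$-Lipschitz, hence equicontinuous, Arzelà--Ascoli gives a subsequence with $\widehat{X}_{n_k} \to Y$ uniformly for some $Y \in C([\tau, T]; \reals^v)$. The key point is that the hypothesis also bounds $\frac{\mathrm{d}^2\widehat{X}_n}{\mathrm{d}t^2}$ uniformly, so the derivatives $\frac{\mathrm{d}\widehat{X}_{n_k}}{\mathrm{d}t}$ are themselves uniformly bounded and uniformly $C$-Lipschitz; a second application of Arzelà--Ascoli, after passing to a further subsequence, gives $\frac{\mathrm{d}\widehat{X}_{n_k}}{\mathrm{d}t} \to W$ uniformly for some continuous $W$. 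Passing the uniform limit through $\widehat{X}_{n_k}(t) = \widehat{X}_{n_k}(\tau) + \int_\tau^t \frac{\mathrm{d}\widehat{X}_{n_k}}{\mathrm{d}s}(s)\,\mathrm{d}s$ identifies $Y$ as $C^1$ with $\frac{\mathrm{d}Y}{\mathrm{d}t} = W$, so in particular $Y \in \mathcal{V}^1([\tau,T];\reals^v)$.

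Finally I would estimate, applying the inequality from the first step to the $C^1$ path $\widehat{X}_{n_k} - Y$,
\[
\norm{\widehat{X}_{n_k} - Y}_{\mathcal{V}} = \norm{\widehat{X}_{n_k} - Y}_\infty + \abs{\widehat{X}_{n_k} - Y}_{BV} \le \norm{\widehat{X}_{n_k} - Y}_\infty + (T - \tau)\norm{\frac{\mathrm{d}\widehat{X}_{n_k}}{\mathrm{d}t} - W}_\infty,
\]
and both terms on the right tend to $0$. Thus every sequence in $K$ has a subsequence converging in the metric space $(\mathcal{V}^1([\tau,T];\reals^v), \norm{\,\cdot\,}_\mathcal{V})$ to a point of that space, which is exactly relative compactness of $K$. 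I expect the only thing requiring any care is the bookkeeping in that last display: one must notice that $\norm{\,\cdot\,}_\mathcal{V}$ secretly controls the $L^1$ norm of the derivative, so that the uniform bounds on the $\widehat{X}_n$ alone are useless and the bound on the second derivatives is not a technical nicety but precisely the hypothesis that makes the first derivatives precompact. Everything else is a routine Arzelà--Ascoli argument.
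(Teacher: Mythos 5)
Your proposal is correct and is essentially the paper's argument unpacked: the double Arzel\`a--Ascoli application is exactly the paper's step ``bounded in $C^2$ implies relatively compact in $C^1$'', and your final estimate $\abs{\widehat{X}_{n_k} - Y}_{BV} \le (T-\tau)\norm{\ttfrac{\mathrm{d}\widehat{X}_{n_k}}{\mathrm{d}t} - W}_\infty$ is exactly the paper's observation that the embedding $C^1 \to \mathcal{V}^1$ is continuous. No gaps; the extra detail you supply (identifying the limit's derivative, and the sequential characterisation of relative compactness) is all standard and correctly handled.
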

	\begin{proof}
	$K$ is bounded in $C^2([\tau, T], \reals^v)$ so it is relatively compact in $C^1([\tau, T], \reals^v)$.
	
	Furthermore for any $\widehat{X} \in K$,
	\begin{align*}
	\norm{\widehat{X}}_\mathcal{V} &= \norm{\widehat{X}}_{\infty} + \abs{\widehat{X}}_{BV}\\
	&= \norm{\widehat{X}}_{\infty} + \norm{\frac{\mathrm{d}\widehat{X}}{\mathrm{d}t}}_1\\
	&\leq \norm{\widehat{X}}_{\infty} + \norm{\frac{\mathrm{d}\widehat{X}}{\mathrm{d}t}}_\infty,
	\end{align*}
	and so the embedding $C^1([\tau, T], \reals^v) \to \mathcal{V}^1([\tau, T]; \reals^v)$ is continuous. Therefore $K$ is also relatively compact in $\mathcal{V}^1([\tau, T]; \reals^v)$.
	\end{proof}
	
	Next we need to understand how a natural cubic spline is controlled by the size of its data. We establish the following crude bounds.
	
	\begin{lemma}\label{lemma:spline-bound}
	Let $v \in \naturals $. Let $x_0, \ldots, x_n \in \reals^v$. Let $t_0, \ldots, t_n \in \reals$ be such that $t_0 < t_1 < \cdots < t_n$. Let $\widehat{X} \colon [t_0, t_n] \to \reals^v$ be the natural cubic spline such that $\widehat{X}(t_i) = x_i$. Let $\tau_i = t_{i + 1} - t_i$ for all $i$. Then there exists an absolute constant $C > 0$ such that
	\begin{equation*}
	\norm{\widehat{X}}_\infty + \norm{\ttfrac{\mathrm{d}\widehat{X}}{\mathrm{d}t}}_\infty \!\!+ \norm{\ttfrac{\mathrm{d}^2\widehat{X}}{\mathrm{d}t^2}}_\infty < C \norm{\tau}_\infty \norm{x}_\infty (\min_i \tau_i)^{-2} (\norm{\tau}_\infty + (\min_i \tau_i)^{-1}).
	\end{equation*}
	\end{lemma}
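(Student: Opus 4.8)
The plan is to route everything through the classical moment representation of a natural cubic spline. Write $M_i$ for the second derivative $\ttfrac{\mathrm{d}^2\widehat{X}}{\mathrm{d}t^2}(t_i) \in \reals^v$ at the knots (the ``moments''). Since $\widehat{X}$ is piecewise cubic and globally $C^2$, its second derivative is the piecewise-linear function interpolating the $M_i$, and the natural boundary condition forces $M_0 = M_n = 0$. Integrating $\widehat{X}''$ twice on each $[t_i, t_{i+1}]$ and imposing continuity of $\widehat{X}$ and of $\widehat{X}'$ at the interior knots produces, by elementary algebra, the standard tridiagonal system
\begin{equation*}
\frac{\tau_{i-1}}{6}\,M_{i-1} + \frac{\tau_{i-1}+\tau_i}{3}\,M_i + \frac{\tau_i}{6}\,M_{i+1} = \frac{x_{i+1}-x_i}{\tau_i} - \frac{x_i - x_{i-1}}{\tau_{i-1}}, \qquad i = 1, \ldots, n-1,
\end{equation*}
for the unknowns $M_1, \ldots, M_{n-1}$ (with $M_0 = M_n = 0$), which I abbreviate as $AM = b$, working coordinatewise in $\reals^v$.

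First I would bound $\norm{M}_\infty = \max_i \abs{M_i}$. Each divided difference on the right-hand side has magnitude at most $2\norm{x}_\infty(\min_i\tau_i)^{-1}$, so $\norm{b}_\infty \le 4\,\norm{x}_\infty(\min_i\tau_i)^{-1}$. The matrix $A$ is strictly row diagonally dominant: its $i$-th diagonal entry $\tfrac13(\tau_{i-1}+\tau_i)$ exceeds the sum $\tfrac16(\tau_{i-1}+\tau_i)$ of the magnitudes of the off-diagonal entries of that row by $\tfrac16(\tau_{i-1}+\tau_i) \ge \tfrac13\min_i\tau_i$. By the standard estimate that a matrix with $\abs{a_{ii}} - \sum_{j \ne i}\abs{a_{ij}} \ge \delta$ for all $i$ is invertible with $\norm{A^{-1}}_\infty \le \delta^{-1}$, we get $\norm{A^{-1}}_\infty \le 3(\min_i\tau_i)^{-1}$ (so in particular the $M_i$, and hence the spline, are well defined). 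Hence $\norm{M}_\infty \le \norm{A^{-1}}_\infty\,\norm{b}_\infty \le 12\,\norm{x}_\infty(\min_i\tau_i)^{-2}$. Since $\widehat{X}''$ is the piecewise-linear interpolant of the $M_i$, this already bounds $\norm{\ttfrac{\mathrm{d}^2\widehat{X}}{\mathrm{d}t^2}}_\infty$.

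Next I would substitute the $M_i$ back into the explicit cubic formula for $\widehat{X}$ on $[t_i, t_{i+1}]$ (the one obtained while deriving the system above) and differentiate it zero, one, and two times. In each case the result is a linear combination of $M_i$, $M_{i+1}$, $x_i$, $x_{i+1}$, and the divided difference $(x_{i+1}-x_i)/\tau_i$, with coefficients that are monomials in the local gap $\tau_i$ of degree at most two. Taking absolute values and using $\abs{x_i} \le \norm{x}_\infty$, $\tau_i \le \norm{\tau}_\infty$, the divided-difference bound, and $\norm{M}_\infty \le 12\norm{x}_\infty(\min_i\tau_i)^{-2}$ gives crude estimates of the shape $\norm{\widehat{X}}_\infty \lesssim \norm{x}_\infty + \norm{M}_\infty\norm{\tau}_\infty^2$, $\norm{\ttfrac{\mathrm{d}\widehat{X}}{\mathrm{d}t}}_\infty \lesssim \norm{x}_\infty(\min_i\tau_i)^{-1} + \norm{M}_\infty\norm{\tau}_\infty$, and $\norm{\ttfrac{\mathrm{d}^2\widehat{X}}{\mathrm{d}t^2}}_\infty \le \norm{M}_\infty$. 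Summing these three, substituting the bound on $\norm{M}_\infty$, and collecting the resulting monomials in $\norm{x}_\infty$, $\norm{\tau}_\infty$, and $(\min_i\tau_i)^{-1}$ — using $\min_i\tau_i \le \norm{\tau}_\infty$ throughout, plus a one-line case split on whether $\min_i\tau_i \le 1$ to absorb the lone $\norm{x}_\infty(\min_i\tau_i)^{-1}$ term — yields the claimed bound $C\,\norm{\tau}_\infty\norm{x}_\infty(\min_i\tau_i)^{-2}(\norm{\tau}_\infty + (\min_i\tau_i)^{-1})$ for an absolute $C$.

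I expect the only genuinely non-routine ingredient to be the inverse-norm estimate $\norm{A^{-1}}_\infty \le 3(\min_i\tau_i)^{-1}$; everything else is bookkeeping. It is worth stressing that the two separate powers of $(\min_i\tau_i)^{-1}$ in the conclusion arise exactly as the product of this diagonal-dominance bound on $\norm{A^{-1}}_\infty$ with the divided-difference bound $\norm{b}_\infty \lesssim \norm{x}_\infty(\min_i\tau_i)^{-1}$, so this is the step that fixes the shape of the estimate; the closing term-collection is tedious but not delicate.
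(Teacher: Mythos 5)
Correct, and essentially the paper's approach: both derive the defining tridiagonal system for the spline's knot data, invert it via the diagonal-dominance (Varah) bound, and finish with crude termwise estimates on each cubic piece. The only difference is that you parametrise by the knot second derivatives $M_i$ (moments) where the paper uses the knot first derivatives $D_i$ (slopes) — a standard variant of the same derivation, and if anything marginally cleaner here, since $\norm{\ttfrac{\mathrm{d}^2\widehat{X}}{\mathrm{d}t^2}}_\infty = \norm{M}_\infty$ is then immediate and your intermediate bound $\norm{M}_\infty \lesssim \norm{x}_\infty(\min_i\tau_i)^{-2}$ slots into the stated estimate just as the paper's $\norm{D}_\infty \lesssim \norm{\tau}_\infty\norm{x}_\infty(\min_i\tau_i)^{-2}$ does.
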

	\begin{proof}
	Surprisingly, we could not find a reference for a fact of this type, but it follows essentially straightforwardly from the derivation of natural cubic splines.
	
	Without loss of generality assume $v = 1$, as we are using the infinity norm over the dimensions $v$, and each cubic interpolation is performed separately for each dimension.
	
	Let the $i$-th piece of $\widehat{X}$, which is a cubic on the interval $[t_i, t_{i + 1}]$, be denoted $Y_i$. Without loss of generality, translate each $Y_i$ to the origin so as to simplify the algebra, so that $Y_i \colon [0, \tau_i] \to \reals$. Let $Y_i(t) = a_i + b_i t + c_i t^2 + d_i t^3$ for some coefficients $a_i, b_i, c_i, d_i$ and $i \in \{0, \ldots, n - 1\}$.
	
	Letting $D_i = Y_i'(0)$ for $i \in \{0, \ldots, n - 1\}$ and $D_n = Y_{n - 1}(\tau_{n-1})$, the displacement and derivative conditions imposed at each knot are $Y_i(0) = x_i$, $Y_i(\tau_i) = x_{i + 1}$, $Y_i'(0) = D_i$ and $Y_i'(\tau_i) = D_{i + 1}$. This then implies that $a_i = x_i$, $b_i = D_i$,
	\begin{align}
	c_i &= 3 \tau_i^{-2} (x_{i + 1} - x_i) - \tau_i^{-1} (D_{i + 1} + 2 D_i),\label{eq:c}\\
	d_i &= 2 \tau_i^{-3} (x_i - x{i + 1}) + \tau_i^{-2} (D_{i + 1} + D_i).\label{eq:d}
	\end{align}
	
	Letting $\lesssim$ denote `less than or equal up to some absolute constant', then these equations imply that
	\begin{align}
	\norm{\widehat{X}}_\infty &= \max_i \norm{Y_i}_\infty \lesssim \max_i (\abs{x_i}  + \tau_i\abs{D_i}) \leq \norm{x}_\infty + \norm{\tau}_\infty \norm{D}_\infty,\label{eq:spline-bound-1}\\
	\norm{\frac{\mathrm{d}\widehat{X}}{\mathrm{d}t}}_\infty &= \max_i \norm{Y_i}_\infty \lesssim \max_i (\tau_i^{-1}\abs{x_i}  + \abs{D_i}) \leq \norm{x}_\infty (\min_i \tau_i)^{-1} + \norm{D}_\infty,\label{eq:spline-bound-2}\\
	\norm{\frac{\mathrm{d}^2 \widehat{X}}{\mathrm{d}t^2}}_\infty &= \max_i \norm{Y_i}_\infty \lesssim \max_i (\tau_i^{-2} \abs{x_i}  + \tau_i^{-1}\abs{D_i}) \leq \norm{x}_\infty (\min_i \tau_i)^{-2} + \norm{D}_\infty (\min_i \tau_i)^{-1}.\label{eq:spline-bound-3}
	\end{align}
	
	Next, the second derivative condition at each knot is $Y_{i - 1}''(\tau_{i - 1}) = Y_i''(0)$ for $i \in \{1, \ldots, n - 1\}$, and the natural condition is $Y_0''(0) = 0$ and $Y_{n - 1}''(\tau_{n - 1}) = 0$. With equations \eqref{eq:c}, \eqref{eq:d} this gives
	\begin{equation*}
	\mathcal{T} D = k,
	\end{equation*}
	where
	\begin{align*}
	\mathcal{T} &= 
	\begin{bmatrix}
	2 \tau_0^{-1} & \tau_0^{-1}\\
	\tau_0^{-1} & 2 (\tau_0^{-1} + \tau_1^{-1}) & \tau_1^{-1}\\
	 & \tau_1^{-1} & 2 (\tau_1^{-1} + \tau_2^{-1}) & \tau_2^{-1}\\
	 && \ddots & \ddots & \ddots\\
	 &&& \tau_{n - 2}^{-1} & 2(\tau_{n - 2}^{-1} + \tau_{n - 1}^{-1}) & \tau_{n - 1}^{-1}\\
	 &&&& \tau_{n - 1}^{-1} & 2\tau_{n - 1}^{-1}
	\end{bmatrix},\\
	D &= 
	\begin{bmatrix}
	D_0\\
	\vdots\\
	D_n
	\end{bmatrix},\\
	k &=
	\begin{bmatrix}
	3 \tau_0^{-2} (x_1 - x_0)\\
	3 \tau_1^{-2} (x_2 - x_1) + 3 \tau_0^{-2} (x_1 - x_0)\\
	\vdots\\
	3 \tau_{n - 1}^{-2} (x_n - x_{n - 1}) + 3 \tau_{n - 2}^{-2} (x_{n - 1} - x_{n - 2})\\
	3 \tau_{n - 1}^{-2} (x_n - x_{n - 1}).
	\end{bmatrix}
	\end{align*}
	
	Let $\norm{\mathcal{T}^{-1}}_\infty$ denote the operator norm and $\norm{D}_\infty$, $\norm{k}_\infty$ denote the elementwise norm. Now $\mathcal{T}$ is diagonally dominant, so the Varah bound \cite{varah} and HM-AM inequality gives
	\begin{equation*}
	\norm{\mathcal{T}^{-1}}_\infty \leq (\min_i (\tau_i^{-1} + \tau_{i + 1}^{-1}))^{-1} \lesssim \norm{\tau}_\infty.
	\end{equation*}
	
	Thus
	\begin{equation*}
	\norm{D}_\infty \lesssim \norm{\tau}_\infty \norm{k}_\infty \lesssim \norm{\tau}_\infty \norm{x}_\infty (\min_i \tau_i)^{-2}.
	\end{equation*}
	
	Together with equations \eqref{eq:spline-bound-1}--\eqref{eq:spline-bound-3} this gives the result.
	\end{proof}
	
	\begin{definition}[Space of time series]
	Let $v \in \naturals$. and $\tau, T \in \reals$ such that $\tau < T$. We define the space of time series in $[\tau, T]$ over $\reals^v$ as
	\begin{equation*}
	\ts{[\tau, T]}{\reals^v} = \set{((t_0, x_0), \ldots, (t_n, x_n))}{n \in \naturals, t_i \in [\tau, T], x_n \in \reals^v, t_0 = \tau, t_n = T, n \geq 2}.
	\end{equation*}
	\end{definition}
	
	To our knowledge, there is no standard topology on time series. One option is to treat them as sequences, however it is not clear how best to treat sequences of different lengths, or how to incorporate timestamp information. Given that a time series is typically some collection of observations from some underlying process, we believe the natural approach is to treat them as subspaces of functions.

	\begin{definition}[General topologies on time series]	
	Let $v \in \naturals$. and $\tau, T \in \reals$ such that $\tau < T$. Let $F$ denote some topological space of functions. Let $\iota \colon \ts{[\tau, T]}{\reals^v} \to F$ be some map. Then we may define a topology on $\ts{[\tau, T]}{\reals^v}$ as the weakest topology with respect to which $\iota$ is continuous.
	\end{definition}

	Recall that we use subscripts to denote function evaulation.
	
	\begin{definition}[Natural cubic spline topology]
	Let $v \in \naturals$. and $\tau, T \in \reals$ such that $\tau < T$. Let $F = C([\tau, T]; \reals^v)$ equipped with the uniform norm. For all $\mathbf{x} = ((t_0, x_0), \ldots, (t_n, x_n)) \in \ts{[\tau, T]}{\reals^v}$, let $\widehat{\iota} \colon \ts{[\tau, T]}{\reals^v} \to F$ produce the natural cubic spline such that $\widehat{\iota}(\mathbf{x})_{t_i} = x_i$ with knots at $t_0, \ldots, t_n$. Then this defines a topology on $\ts{[\tau, T]}{\reals^v}$ as in the previous definition.
	\end{definition}
	
	\begin{remark}
	In fact this defines a seminorm on $\ts{[\tau, T]}{\reals^v}$, by $\norm{\mathbf{x}} = \norm{\widehat{\iota}(\mathbf{x})}_\infty$. This is only a seminorm as for example $((0, 0), (2, 2))$ and $((0, 0), (1, 1), (2, 2))$ have the same natural cubic spline interpolation. This can be worked around so as to instead produce a full norm, but it is a deliberate choice not to: we would often prefer that these time series be thought of as equal. (And if it they are not equal, then first augmenting with observational intensity as in the main paper should distinguish them.)
	\end{remark}
	
	\begin{thm}[Universal approximation with Neural CDEs via natural cubic splines]\label{thm:datawise-universal-approximation}
	Let $\tau, T \in \reals$ with $\tau < T$ and let $v, u \in \naturals$. For any $w \in \naturals$ let
	\begin{align*}
	F^w_{\mathcal{N\!N}} &= \set{f \colon \reals^w \to \reals^{w \times (v + 1)}}{f \text{ is a feedforward neural network}},\\
	L^{w, u} &= \set{\ell \colon \reals^w \to \reals^u}{\ell \text{ is linear}},\\
	\xi^w_{\mathcal{N\!N}}&= \set{\zeta \colon \reals^{v + 1} \to \reals^w}{\zeta \text{ is a feedforward neural network}}.
	\end{align*}
	
	Let $\widehat{\iota}$ denote the natural cubic spline interpolation as in the previous definition, and recall that `removing the hat' is our notation for augmenting with time. For any $w \in \naturals$, any $f \in F^w$ and any $\zeta \in \xi^w_{\mathcal{N\!N}}$ and any $\mathbf{x} \in \ts{[\tau, T]}{\reals^v}$, let $z^{f, \zeta, \mathbf{x}} \colon [\tau, T] \to \reals^w$ be the unique solution to the CDE
	\begin{equation*}
	z^{f, \zeta, \mathbf{x}}_t = z^{f, \zeta, \mathbf{x}}_\tau + \int_\tau^t f(z^{f, \zeta, \mathbf{x}}_s) \mathrm{d}\iota(\mathbf{x})_s\quad\text{for $t \in (\tau, T]$,}
	\end{equation*}
	with $z^{f, \zeta, \mathbf{x}}_\tau = \zeta(\iota(\mathbf{x})_\tau)$.
	
	Let $K \subseteq \ts{[\tau, T]}{\reals^v}$ be such that there exists $C > 0$ such that
	\begin{equation}
	\norm{x}_\infty (\min_i (t_{i + 1} - t_i))^{-3} < C \label{eq:compact-require}
	\end{equation}
	for every $\mathbf{x} = ((t_0, x_0), \ldots, (t_n, x_n)) \in K$. (With $C$ independent of $\mathbf{x}$.)
	
	Then
	\begin{equation*}
	\bigcup_{w \in \naturals} \set{\mathbf{x} \mapsto \ell(z^{f, \zeta, \mathbf{x}}_T)}{f \in F^w_{\mathcal{N\!N}}, \ell \in L^{w, u}, \zeta \in \xi^w_{\mathcal{N\!N}}}
	\end{equation*}
	is dense in $C(K; \reals^u)$ with respect to the natural cubic spline topology on $\ts{[\tau, T]}{\reals^v}$.
	\end{thm}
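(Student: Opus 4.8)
The plan is to reduce Theorem \ref{thm:datawise-universal-approximation} to the path-wise statement of Theorem \ref{prop:pathwise-universal-approximation}. There are two things to arrange: first, that the family of natural cubic splines obtained from $K$ is a relatively compact subset of $\mathcal{V}^1([\tau, T]; \reals^v)$ (so that Theorem \ref{prop:pathwise-universal-approximation} is applicable), and second, that the natural cubic spline topology on $\ts{[\tau, T]}{\reals^v}$ is compatible enough with $\widehat{\iota}$ that a continuous target function on $K$ descends to a continuous function on those splines. Once both are in place, the construction of a continuous $f$ and $\zeta$ is handed over to Theorem \ref{prop:pathwise-universal-approximation}, and a final approximation step upgrades them to feedforward neural networks.

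First I would feed the hypothesis \eqref{eq:compact-require} into Lemma \ref{lemma:spline-bound}. The mesh sizes $\tau_i = t_{i+1} - t_i$ always satisfy $\norm{\tau}_\infty \le T - \tau$, and $\norm{x}_\infty (\min_i \tau_i)^{-2} \le \norm{\tau}_\infty \,\norm{x}_\infty (\min_i \tau_i)^{-3}$, so every factor appearing in the bound of Lemma \ref{lemma:spline-bound} is controlled by \eqref{eq:compact-require} together with $T - \tau$. Hence the sup-norms of $\widehat{\iota}(\mathbf{x})$ and its first two derivatives are bounded by a single constant independent of $\mathbf{x} \in K$; that is, $\widehat{\iota}(K)$ is uniformly bounded in $C^2([\tau, T], \reals^v)$. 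Lemma \ref{lemma:compact} then tells us that $K' := \overline{\widehat{\iota}(K)}$ is compact in $\mathcal{V}^1([\tau, T]; \reals^v)$. This is the step that consumes hypothesis \eqref{eq:compact-require}.

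Next, the topological descent. The natural cubic spline topology on $\ts{[\tau, T]}{\reals^v}$ is by construction the initial topology for $\widehat{\iota}$, so its open subsets of $K$ are exactly the sets $\widehat{\iota}^{-1}(W) \cap K$ with $W \subseteq C([\tau, T]; \reals^v)$ open. Consequently two time series with the same spline are topologically indistinguishable in $K$, and since $\reals^u$ is Hausdorff any $\alpha \in C(K; \reals^u)$ is constant on the fibres of $\widehat{\iota}$; it therefore factors as $\alpha = \beta \circ \widehat{\iota}$ for a well-defined $\beta \colon \widehat{\iota}(K) \to \reals^u$, and a direct check using the description of the open sets shows $\beta$ is continuous for the subspace topology from $C([\tau, T]; \reals^v)$. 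As $\widehat{\iota}(K) \subseteq K'$ and metric spaces are normal, Tietze's theorem (applied coordinatewise) extends $\beta$ to $\overline{\beta} \in C(K'; \reals^u)$. Now, given $\varepsilon > 0$, apply Theorem \ref{prop:pathwise-universal-approximation} to the compact set $K'$, the function $\overline{\beta}$, and tolerance $\varepsilon/2$ to obtain $w \in \naturals$, a continuous $f$, a linear $\ell \in L^{w, u}$, and a continuous $\zeta$ with $\sup_{X \in K'} \abs{\overline{\beta}(\widehat{X}) - \ell(z^{f, \zeta, X}_T)} < \varepsilon/2$. Finally, replace $f$ and $\zeta$ by neural networks: the set of values $\set{z^{f, \zeta, X}_t}{X \in K',\ t \in [\tau, T]}$ and the set $\set{X_\tau}{X \in K'}$ are compact (the CDE solution map is continuous in the driving path by \cite[Theorem 1.3]{levy-lyons}), so choosing $\widetilde{f} \in F^w_{\mathcal{N\!N}}$ and $\widetilde{\zeta} \in \xi^w_{\mathcal{N\!N}}$ that approximate $f$ and $\zeta$ uniformly on slightly enlarged such compacts, and using that the CDE solution operator is also continuous in the vector field and initial condition, yields $\sup_{X \in K'} \abs{\ell(z^{f, \zeta, X}_T) - \ell(z^{\widetilde{f}, \widetilde{\zeta}, X}_T)} < \varepsilon/2$. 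Composing with $\widehat{\iota}$ and using $\overline{\beta} \circ \widehat{\iota} = \alpha$ on $K$ gives an element $\mathbf{x} \mapsto \ell(z^{\widetilde{f}, \widetilde{\zeta}, \mathbf{x}}_T)$ of the claimed family that is $\varepsilon$-close to $\alpha$ uniformly on $K$, proving density.

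I expect the main obstacle to be this last step: turning pointwise-on-a-compact approximation of the continuous $f$ into uniform approximation of the terminal value $z_T$ over all driving splines at once. This requires continuity of the CDE solution operator in the vector field (not merely in the control), which follows from the Lipschitz/Gronwall estimates underlying \cite[Theorem 1.3]{levy-lyons} together with a bootstrap argument confining the perturbed solution to the compact region where $f$ and $\widetilde{f}$ remain close; one also needs the neural-network surrogates to be globally Lipschitz (true for ReLU or tanh activations) so that the perturbed CDE stays well posed on all of $[\tau, T]$. By comparison the relative-compactness step is a routine application of Lemmas \ref{lemma:spline-bound} and \ref{lemma:compact}, and the topological descent is straightforward once the initial-topology viewpoint is adopted.
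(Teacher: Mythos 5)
Your proposal follows essentially the same route as the paper: hypothesis \eqref{eq:compact-require} plus Lemma \ref{lemma:spline-bound} gives uniform $C^2$ bounds, Lemma \ref{lemma:compact} gives relative compactness of $\widehat{\iota}(K)$ in $\mathcal{V}^1$, Theorem \ref{prop:pathwise-universal-approximation} is applied on the closure, continuous vector fields and initial networks are then replaced by feedforward networks via a Gronwall/Picard-type stability estimate, and the result descends to $K$ through the initial topology. The only differences are that you spell out the topological descent (factoring through $\widehat{\iota}$ and extending to the closure) and the solution-map continuity in the vector field in more detail than the paper, which simply invokes ``standard estimates''; this is a faithful reconstruction of the intended argument.
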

	\begin{proof}
	Fix $\mathbf{x} = ((t_0, x_0), \ldots, (t_n, x_n)) \in K$. Let $\widehat{X} = \widehat{\iota}(\mathbf{x})$. Now $\norm{\tau}_\infty \leq T - \tau$ is bounded so by Lemma \ref{lemma:spline-bound} and the assumption of equation \eqref{eq:compact-require}, there exists a constant $C_1 > 0$ independent of $\mathbf{x}$ such that
	\begin{equation*}
	\norm{\widehat{X}}_\infty + \norm{\frac{\mathrm{d}\widehat{X}}{\mathrm{d}t}}_\infty + \norm{\frac{\mathrm{d}^2\widehat{X}}{\mathrm{d}t^2}}_\infty < C_1.
	\end{equation*}
	Thus by Lemma \ref{lemma:compact}, $\widehat{\iota}(K)$ is relatively compact in $\mathcal{V}^1([\tau, T]; \reals^v)$.
	
	Let $K_1 = \overline{\widehat{\iota}(K)}$, where the overline denotes a closure. Now by Theorem \ref{prop:pathwise-universal-approximation}, and defining $F^w$ and $\xi^w$ as in the statement of that theorem,
	\begin{equation*}
	\bigcup_{w \in \naturals} \set{\widehat{\iota}(\mathbf{x}) \mapsto \ell(z^{f, \zeta, \mathbf{x}}_T)}{f \in F^w, \ell \in L^{w, u}, \zeta \in \xi^w}
	\end{equation*}
	is dense in $C(K_1, \reals^u)$.
	
	For any $f \in F^w$, any $\zeta \in \xi^w$, any $f_{\mathcal{N\!N}} \in F^w_{\mathcal{N\!N}}$ and any $\zeta_{\mathcal{N\!N}} \in \xi^w_{\mathcal{N\!N}}$, the terminal values $z_T^{f, \zeta, \mathbf{x}}$ and $z_T^{f_{\mathcal{N\!N}}, \zeta_{\mathcal{N\!N}}, \mathbf{x}}$ may be compared by standard estimates, for example as commonly used in the proof of Picard's theorem. Classical universal approximation results for neural networks \cite{pinkus, deepandnarrow} then yield that
	\begin{equation*}
	\bigcup_{w \in \naturals} \set{\widehat{\iota}(\mathbf{x}) \mapsto \ell(z^{f, \zeta, \mathbf{x}}_T)}{f \in F^w_{\mathcal{N\!N}}, \ell \in L^{w, u}, \zeta \in \xi^w_{\mathcal{N\!N}}}
	\end{equation*}
	is dense in $C(K_1, \reals^u)$.
	
	By the definition of the natural cubic spline topology on $\ts{[\tau, T]}{\reals^v}$, then
	\begin{equation*}
	\bigcup_{w \in \naturals} \set{\mathbf{x} \mapsto \ell(z^{f, \zeta, \mathbf{x}}_T)}{f \in F^w_{\mathcal{N\!N}}, \ell \in L^{w, u}, \zeta \in \xi^w_{\mathcal{N\!N}}}
	\end{equation*}
	is dense in $C(K, \reals^u)$.
	\end{proof}
	
	\section{Comparison to alternative ODE models}\label{appendix:nonlinear-g}
	Suppose if instead of equation \eqref{eq:g-theta-x}, we replace $g_{\theta, X}(z, s)$ by $h_{\theta}(z, X_s)$ for some other vector field $h_\theta$. This might seem more natural. Instead of having $g_{\theta, X}$ be linear in $\ttfrac{\mathrm{d}X}{\mathrm{d}s}$, we take a $h_\theta$ that is potentially nonlinear in the control $X_s$.
	
	Have we gained anything by doing so? It turns out no, and in fact we have lost something. The Neural CDE setup directly subsumes anything depending directly on $X$.
	
	\begin{thm}\label{cont-rnn-thm}
	Let $\tau, T \in \reals$ with $\tau < T$, let $v, w \in \naturals$ with $v + 1 < w$. Let
	\begin{align*}
	F &= \set{f \colon \reals^w \to \reals^{w \times (v + 1)}}{f \text{ is continuous}},\\
	H &= \set{h \colon \reals^{w - v - 1} \times \reals^{v + 1} \to \reals^{w - v - 1}}{h \text{ is continuous}},\\
	\xi&= \set{\zeta \colon \reals^{v + 1} \to \reals^w}{\zeta \text{ is continuous}},\\
	\mathbb{X}&= \set{\widehat{X} \colon [\tau, T] \to \reals^v}{\widehat{X} \text{ continuous and of bounded variation}}.
	\end{align*}
	
	For any $\widehat{X} \in \mathbb{X}$, let $X_t = (\widehat{X}_t, t)$. Let $\pi \colon \reals^w \to \reals^{w - v - 1}$ be the orthogonal projection onto the first $w - v - 1$ coordinates.
	
	For any $f \in F$, any $\zeta \in \xi$, and any $\widehat{X} \in \mathbb{X}$, let $z^{f, \zeta, X} \colon [\tau, T] \to \reals^w$ be the unique solution to
	\begin{equation*}
	z^{f, \zeta, X}_t = z^{f, \zeta, X}_\tau + \int_\tau^t f(z^{f, \zeta, X}_s) \mathrm{d} X_s\quad\text{for $t \in (\tau, T]$,}
	\end{equation*}
	with $z^{f, \zeta, X}_\tau = \zeta(X_\tau)$.
	
	Similarly for any $h \in H$, any $\zeta \in \xi$, and any $\widehat{X} \in \mathbb{X}$, let $y^{f, X} \colon [\tau, T] \to \reals^{w - v - 1}$ be the unique solution to
	\begin{equation*}
	y^{h, \zeta, X}_t = y^{h, \zeta, X}_\tau + \int_\tau^t h(y^{h, \zeta, X}_s, X_s) \mathrm{d}s\quad\text{for $t \in (\tau, T]$,}
	\end{equation*}
	with $y^{h, \zeta, X}_\tau = \pi(\zeta(X_\tau))$.
	
	Let $\mathcal{Y} = \set{\widehat{X} \mapsto y^{h, \zeta, X}}{h \in H, \zeta \in \xi}$ and $\mathcal{Z} = \set{\widehat{X} \mapsto \pi \circ z^{f, \zeta, X}}{f \in F, \zeta \in \xi}$.
	
	Then $\mathcal{Y} \subsetneq \mathcal{Z}$.
	\end{thm}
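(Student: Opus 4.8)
The plan is to prove the two halves of $\mathcal{Y}\subsetneq\mathcal{Z}$ separately: first the inclusion $\mathcal{Y}\subseteq\mathcal{Z}$ by an explicit construction, then strictness by exhibiting a single member of $\mathcal{Z}$ that no member of $\mathcal{Y}$ can equal.

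For $\mathcal{Y}\subseteq\mathcal{Z}$, the idea is that a CDE can carry an exact copy of its own driving path inside its hidden state, and that the time channel of $X$ lets a CDE integrate ``against $\mathrm{d}s$''. Given $h\in H$ and $\zeta\in\xi$, I would split the hidden state $z=(p,q)$ with $p\in\reals^{w-v-1}$ and $q\in\reals^{v+1}$, and define $f\in F$ by taking the bottom $(v+1)\times(v+1)$ block to be the identity and the top $(w-v-1)\times(v+1)$ block to have every column zero except the last, which equals $h(p,q)$; this $f$ is continuous since $h$ is. Take the initial condition $\zeta'(x)=(\pi(\zeta(x)),x)\in\reals^w$. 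Then the $q$-component solves $q_t=X_\tau+\int_\tau^t\mathrm{d}X_s=X_t$, and because the last channel of $X$ is $t$ itself, the $p$-component solves $p_t=\pi(\zeta(X_\tau))+\int_\tau^t h(p_s,X_s)\,\mathrm{d}s$, which is exactly the defining equation of $y^{h,\zeta,X}$. Uniqueness of the full CDE solution then forces $\pi\circ z^{f,\zeta',X}=y^{h,\zeta,X}$, so the map $\widehat X\mapsto y^{h,\zeta,X}$ lies in $\mathcal{Z}$.

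For strictness, the key structural fact is that every element of $\mathcal{Y}$ maps a control to a $C^1$ path: if $y_t=y_\tau+\int_\tau^t h(y_s,X_s)\,\mathrm{d}s$, then the integrand $s\mapsto h(y_s,X_s)$ is continuous, so $y\in C^1$ by the fundamental theorem of calculus. I would then exhibit an explicit element of $\mathcal{Z}$ that violates this. Take $f\equiv A$ constant, where $A\in\reals^{w\times(v+1)}$ has the single nonzero entry $A_{11}=1$, and $\zeta\equiv 0$; then $z^{f,\zeta,X}_t=A(X_t-X_\tau)$, so $\pi\circ z^{f,\zeta,X}$ is the path $t\mapsto(\widehat X^1_t-\widehat X^1_\tau,0,\dots,0)\in\reals^{w-v-1}$ (well-defined since $w-v-1\geq 1$). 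Choosing $\widehat X\in\mathbb{X}$ whose first coordinate has a corner, for instance $\widehat X_t=(\abs{t-t_*},0,\dots,0)$ for some $t_*\in(\tau,T)$ — which is continuous and of bounded variation — makes this output path fail to be $C^1$. Hence this element of $\mathcal{Z}$ cannot coincide with any map $\widehat X\mapsto y^{h,\zeta,X}$, and combined with Part 1 this yields $\mathcal{Y}\subsetneq\mathcal{Z}$.

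The genuinely routine parts are the Riemann--Stieltjes identity $\int_\tau^t A\,\mathrm{d}X_s=A(X_t-X_\tau)$ for constant $A$ and the continuity check on the constructed $f$. The step I expect to need the most care is the uniqueness bookkeeping in Part 1: the CDE for $(p,q)$ does not have Lipschitz coefficients in general, but it is still well-posed because $q$ is pinned down outright as $q_t=X_t$ and the residual equation for $p$ is precisely the ODE whose solution the theorem has already assumed to be unique — so I would argue uniqueness directly from this reduction rather than invoke a general CDE existence-and-uniqueness theorem.
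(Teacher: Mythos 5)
Your proof is correct. The inclusion $\mathcal{Y}\subseteq\mathcal{Z}$ is essentially identical to the paper's: the same block-structured vector field (identity block copying $X$ into the last $v+1$ hidden channels, $h$ placed in the time column of the top block), the same augmented initial condition, and the same reduction-to-the-$y$-equation handling of uniqueness, which you in fact treat more carefully than the paper does. For strictness you use the same witness (constant $f$ with a single unit entry, so that the first output channel is $\widehat{X}^1_t-\widehat{X}^1_\tau$), but a different exclusion argument: the paper differentiates the supposed identity $y^{h,\Xi,X}=\pi\circ z^{f,\zeta,X}$ along differentiable controls and concludes that $\ttfrac{\mathrm{d}X^1}{\mathrm{d}t}(t)$ would have to be a function of $t$, $X_t$ and $X_\tau$ alone, declaring this ``clearly impossible''; you instead observe that every element of $\mathcal{Y}$ outputs a $C^1$ path (continuous integrand plus the fundamental theorem of calculus) while the witness maps a control with a corner to a non-$C^1$ path. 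Your regularity argument is more self-contained -- it needs only one explicit control rather than the implicit exhibition of two smooth controls sharing endpoint data but differing in derivative that would be needed to make the paper's final step fully rigorous. The mild trade-off is that the paper's version shows the witness cannot be matched even on the subclass of smooth controls, so the obstruction it identifies is not merely one of path regularity; but for the theorem as stated both arguments suffice.
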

	
	In the above statement, then a practical choice of $f \in F$ or $h \in H$ will typically correspond to some trained neural network.
	
	Note the inclusion of time via the augmentation $\widehat{X} \mapsto X$. Without it, then the reparameterisation invariance property of CDEs \cite{hambly2010uniqueness}, \cite[Proposition A.7]{kidger2019deep} will restrict the possible functions that CDEs can represent. This hypothesis is not necessary for the $\mathcal{Y} \neq \mathcal{Z}$ part of the conclusion.
	
	Note also how the CDE uses a larger state space of $w$, compared to $w - v - 1$ for the alternative ODE. The reason for this is that whilst $f$ has no explicit nonlinear dependence on $X$, we may construct it to have such a dependence implicitly, by recording $X$ into $v + 1$ of its $w$ hidden channels, whereupon $X$ is hidden state and may be treated nonlinearly. This hypothesis is also not necessary to demonstrate the $\mathcal{Y} \neq \mathcal{Z}$ part of the conclusion.
	
	This theorem is essentially an algebraic statement, and is thus not making any analytic claims, for example on universal approximation.
	
	\begin{proof}\quad\newline
	
	\boldheading{That $\mathcal{Y} \neq \mathcal{Z}$:}
	Let $z^{f, \zeta, \cdot} \in \mathcal{Z}$ for $\zeta \in \xi$ arbitrary and $f \in F$ constant and such that
	\begin{center}\begin{tikzpicture}
	 \matrix (vec) [matrix of math nodes, left delimiter = {[}, right delimiter = {]}] {
	1 & 0 & 0 & \cdots & 0\\
	0 & 0 & 0 & \cdots & 0\\
	\vdots & \vdots & \vdots & & \vdots\\
	0 & 0 & 0 & \cdots & 0\\
	};
	
	\node (a) at (vec-1-5.north) [right=15pt]{};
	\node (d) at (vec-4-5.south) [right=15pt]{};
	\draw [decorate, decoration={brace, amplitude=10pt}] (a) -- (d) node[midway, right=10pt] {\footnotesize $w$};
	
	\node (e) at (vec-4-1.center) [left=0pt]{};
	\draw [] (e) node[midway, left=40pt] {$f(z) = $};
	
	\node (f) at (vec-4-1.west) [below=10pt]{};
	\node (i) at (vec-4-5.east) [below=10pt]{};
	\draw [decorate, decoration={brace, amplitude=10pt}] (i) -- (f) node[midway, below=10pt] {\footnotesize $v + 1$};
	\end{tikzpicture}\end{center}
	
	Then for any $\widehat{X} \in \mathbb{X}$, the corresponding CDE solution in $\mathcal{Z}$ is
	\begin{equation*}
	z^{f, \zeta, X}_t = z^{f, \zeta, X}_{\tau} + \int_{\tau}^t f(z^{f, \zeta, X}_s) \mathrm{d}X_s,
	\end{equation*}
	and so the first component of its solution is
	\begin{equation*}
	z^{f, \zeta, X, 1}_t = X^1_t - X^1_\tau + \zeta^1(X_\tau),
	\end{equation*}
	whilst the other components are constant
	\begin{equation*}
	z^{f, \zeta, X, i}_t = \zeta^i(X_\tau)
	\end{equation*}
	for $i \in \{2, \ldots, w\}$, where superscripts refer to components throughout.
	
	Now suppose for contradiction that there exists $y^{h, \zeta, \cdot} \in \mathcal{Y}$ for some $\Xi \in \xi$ and $h \in H$ such that $y^{h, \Xi, X} = \pi \circ z^{f, \zeta, X}$ for all $\widehat{X} \in \mathbb{X}$. Now $y^{h, \Xi, X}$ must satisfy
	\begin{equation*}
	y^{h, \Xi, X}_t = y^{h, \Xi, X}_\tau + \int_\tau^t h(y^{h, \Xi, X}_s, X_s) \mathrm{d}s,
	\end{equation*}
	and so
	\begin{equation*}
	(X^1_t - X^1_{\tau} + \zeta^1(X_\tau), 0, \ldots, 0) = \pi(\Xi(X_\tau)) + \int_{\tau}^t h((X^1_s - X^1_{\tau} + \zeta^1(X_\tau), \zeta^2(X_\tau), \ldots, \zeta^w(X_\tau)), X_s) \mathrm{d}s.
	\end{equation*}
	Consider those $X$ which are differentiable. Differentiating with respect to $t$ now gives
	\begin{equation}\label{eq:impossible}
	\frac{\mathrm{d}X^1}{\mathrm{d}t}(t) = h^1((X^1_s - X^1_{\tau} + \zeta^1(X_\tau), \zeta^2(X_\tau), \ldots, \zeta^w(X_\tau)), X_t).
	\end{equation}
	
	That is, $h^1$ satisfies equation \eqref{eq:impossible} for all differentiable $X$. This is clearly impossible: the right hand side is a function of $t$,  $X_t$ and $X_{\tau}$ only, which is insufficient to determine $\ttfrac{\mathrm{d}X^1}{\mathrm{d}t}(t)$.
	
	\boldheading{That $\mathcal{Y} \subseteq \mathcal{Z}$:}
	Let $y^{h, \Xi, X} \in \mathcal{Y}$ for some $\Xi \in \xi$ and $h \in H$. Let $\sigma \colon \reals^w \to \reals^{v + 1}$ be the orthogonal projection onto the last $v + 1$ coordinates. Let $\zeta \in \xi$ be such that $\pi \circ \zeta = \pi \circ \Xi$ and $\sigma(\zeta(X_\tau)) = X_\tau$. Then let $f \in F$ be defined by
	\begin{center}\begin{tikzpicture}
	 \matrix (vec) [matrix of math nodes, left delimiter = {[}, right delimiter = {]}] {
	0 & 0 & \cdots & 0 & h^1(\pi(z), \sigma(z))\\
	\vdots & \vdots & & \vdots & \vdots\\
	0 & 0 & \cdots & 0 & h^{w - v - 1}(\pi(z), \sigma(z))\\
	1 & 0 & \cdots & 0 & 0\\
	0 & 1 & \cdots & 0 & 0\\
	\vdots & \vdots & \ddots & \vdots & \vdots\\
	0 & 0 & \cdots & 1 & 0\\
	0 & 0 & \cdots & 0 & 1\\
	};
	
	\node (a) at (vec-1-5.north) [right=60pt]{};
	\node (b) at (vec-3-5.south) [right=60pt]{};
	\node (c) at (vec-4-5.north) [right=60pt]{};
	\node (d) at (vec-8-5.south) [right=60pt]{};
	\draw [decorate, decoration={brace, amplitude=10pt}] (a) -- (b) node[midway, right=10pt] {\footnotesize $w - v - 1$};
	\draw [decorate, decoration={brace, amplitude=10pt}] (c) -- (d) node[midway, right=10pt] {\footnotesize $v + 1$};
	
	\node (e) at (vec-4-1.center) [left=0pt]{};
	\draw [] (e) node[midway, left=80pt] {$f(z) = $};
	
	\node (f) at (vec-8-1.west) [below=10pt]{};
	\node (g) at (vec-8-4.east) [below=10pt]{};
	\node (h) at (vec-8-4.east) [below=10pt]{};
	\node (i) at (vec-8-5.east) [below=10pt]{};
	\draw [decorate, decoration={brace, amplitude=10pt}] (g) -- (f) node[midway, below=10pt] {\footnotesize $v$};
	\draw [decorate, decoration={brace, amplitude=10pt}] (i) + (35pt,0) -- (h) node[midway, below=10pt] {\footnotesize $1$};	
	\end{tikzpicture}\end{center}
	
	Then for $t \in (\tau, T]$,
	\begin{align*}
	z^{f, \zeta, X}_t &= \zeta(X_\tau) + \int_\tau^t f(z^{f, \zeta, X}_s) \mathrm{d}X_s\\
	&=\zeta(X_\tau) + \int_\tau^t \begin{bmatrix}
	0 & 0 & \cdots & 0 & h^1(\pi(z^{f, \zeta, X}_s), \sigma(z^{f, \zeta, X}_s))\\
	\vdots & \vdots & & \vdots & \vdots\\
	0 & 0 & \cdots & 0 & h^{w - v - 1}(\pi(z^{f, \zeta, X}_s), \sigma(z^{f, \zeta, X}_s))\\
	1 & 0 & \cdots & 0 & 0\\
	0 & 1 & \cdots & 0 & 0\\
	\vdots & \vdots & \ddots & \vdots & \vdots\\
	0 & 0 & \cdots & 1 & 0\\
	0 & 0 & \cdots & 0 & 1\\
	\end{bmatrix}
	\begin{bmatrix}
	\mathrm{d}\widehat{X}^1_s\\
	\vdots\\
	\mathrm{d}\widehat{X}^v_s\\
	\mathrm{d}s
	\end{bmatrix}\\
	&= \zeta(X_\tau) + \int_\tau^t \begin{bmatrix}
	h^1(\pi(z^{f, \zeta, X}_s), \sigma(z^{f, \zeta, X}_s)) \mathrm{d}s\\
	\vdots\\
	h^{w - v - 1}(\pi(z^{f, \zeta, X}_s), \sigma(z^{f, \zeta, X}_s)) \mathrm{d}s\\
	\mathrm{d}\widehat{X}^1_s\\
	\vdots\\
	\mathrm{d}\widehat{X}^v_s\\
	\mathrm{d}s\\
	\end{bmatrix}\\
	&= \zeta(X_\tau) + \int_\tau^t \begin{bmatrix}
	h(\pi(z^{f, \zeta, X}_s), \sigma(z^{f, \zeta, X}_s)) \mathrm{d}s\\
	\mathrm{d}X_s
	\end{bmatrix}.
	\end{align*}
	
	Thus in particular
	\begin{equation*}
	\sigma(z^{f, \zeta, X}_t) = \sigma(\zeta(X_\tau)) + \int_\tau^t \mathrm{d}X_s = \sigma(\zeta(X_\tau)) - X_\tau + X_t = X_t.
	\end{equation*}
	Thus
	\begin{equation*}
	\pi(z^{f, \zeta, X}_t) = \pi(\zeta(X_\tau)) + \int_\tau^t h(\pi(z^{f, \zeta, X}_s), \sigma(z^{f, \zeta, X}_s)) \mathrm{d}s  = \pi(\Xi(X_\tau)) + \int_\tau^t h(\pi(z^{f, \zeta, X}_s), X_s) \mathrm{d}s.
	\end{equation*}
	
	Thus we see that $\pi(z^{f, \zeta, X})$ satisfies the same differential equation as $y^{h, \Xi, X}$. So by uniqueness of solution \cite[Theorem 1.3]{levy-lyons}, $y^{h, \Xi, X} = \pi(z^{f, \zeta, X}) \in \mathcal{Z}$.
	\end{proof}
	
	\section{Experimental details}\label{appendix:experimental-details}
	
	\subsection{General notes}
	\boldheading{Code}
	Code to reproduce every experiment can be found at \url{https://github.com/patrick-kidger/NeuralCDE}.
	
	\boldheading{Normalisation}
	Every dataset was normalised so that each channel has mean zero and variance one.
	
	\boldheading{Loss}
	Every binary classification problem used binary cross-entropy loss applied to the sigmoid of the output of the model. Every multiclass classification problem used cross-entropy loss applied to the softmax of the output of the model.

	\boldheading{Architectures}
	For both the Neural CDE and ODE-RNN, the integrand $f_\theta$ was taken to be a feedforward neural network. A final linear layer was always used to map from the terminal hidden state to the output.
	
	\boldheading{Activation functions}
	For the Neural CDE model we used ReLU activation functions. Following the recommendations of \cite{latent-odes}, we used tanh activation functions for the ODE-RNN model, who remark that for the ODE-RNN model, tanh activations seem to make the model easier for the ODE solver to resolve. Interestingly we did not observe this behaviour when trying tanh activations and \texttt{method=`dopri5'} with the Neural CDE model, hence our choice of ReLU.
	
	\boldheading{Optimiser}
	Every problem used the Adam \cite{kingma2015} optimiser as implemented by PyTorch 1.3.1 \cite{pytorch}. Learning rate and batch size varied between experiments, see below. The learning rate was reduced if metrics failed to improve for a certain number of epochs, and training was terminated if metrics failed to improve for a certain (larger) number of epochs. The details of this varied by experiment, see the individual sections. Once training was finished, then the parameters were rolled back to the parameters which produced the best validation accuracy throughout the whole training procedure. The learning rate for the final linear layer (mapping from the hidden state of a model to the output) was typically taken to be much larger than the learning rate used elsewhere in the model; this is a standard trick that we found improved performance for all models.
	
	\boldheading{Hyperparameter selection}
	In brief, hyperparameters were selected to optimise the ODE-RNN baseline, and equivalent hyperparameters used for the other models.
	
	In more detail:
	
	We began by selecting the learning rate. This was selected by starting at 0.001 and reducing it until good performance was achieved for a small ODE-RNN model with batch size 32.
	
	After this, we increased the batch size until the selected model trained at what was in our judgement a reasonable speed. As is standard practice, we increased the learning rate proportionate to the increase in batch size.
	
	Subsequently we selected model hyperparameters (number of hidden channels, width and depth of the vector field network) via a grid search to optimise the ODE-RNN baseline. A single run of each hyperparameter choice was performed. The equivalent hyperparameters were then used on the GRU-$\Delta$t, GRU-D, GRU-ODE baselines, and also our Neural CDE models, after being adjusted to produce roughly the same number of parameters for each model.
	
	The grids searched over and the resulting hyperparameters are stated in the individual sections below.
	
	\boldheading{Weight regularisation}
	$L^2$ weight regularisation was applied to every parameter of the ODE-RNN, GRU-$\Delta$t and GRU-D models, and to every parameter of the vector fields for the Neural CDE and GRU-ODE models.
	
	\boldheading{ODE Solvers}
	The ODE components of the ODE-RNN, GRU-ODE, and Neural CDE models were all computed using the fourth-order Runge-Kutta with 3/8 rule solver, as implemented by passing \texttt{method=`rk4'} to the \texttt{odeint\_adjoint} function of the \texttt{torchdiffeq} \cite{torchdiffeq} package. The step size was taken to equal the minimum time difference between any two adjacent observations.
		
	\boldheading{Adjoint backpropagation}
	The GRU-ODE, Neural CDE and the ODE component of the ODE-RNN are all trained via the adjoint backpropagation method \cite{neural-odes}, as implemented by \texttt{odeint\_adjoint} function of the \texttt{torchdiffeq} package.
	
	\boldheading{Computing infrastructure}	
	All experiments were run on one of two computers; both used Ubuntu 18.04.4 LTS, were running PyTorch 1.3.1, and used version 0.0.1 of the \texttt{torchdiffeq} \cite{torchdiffeq} package. One computer was equipped with a Xeon E5-2960 v4, two GeForce RTX 2080 Ti, and two Quadro GP100, whilst the other was equipped with a Xeon Silver 4104 and three GeForce RTX 2080 Ti.

	\subsection{CharacterTrajectories}\label{appendix:uea}
	The learning rate used was 0.001 and the batch size used was 32. If the validation loss stagnated for 10 epochs then the learning rate was divided by 10 and training resumed. If the training loss or training accuracy stagnated for 50 epochs then training was terminated. The maximum number of epochs allowed was 1000.
	
	We combined the train/test split of the original dataset (which are of unusual proportion, being 50\%/50\%), and then took a 70\%/15\%/15\% train/validation/test split.
	
	The initial condition $\zeta_\theta$ of the Neural CDE model was taken to be a learnt linear map from the first observation to the hidden state vector. (Recall that is an important part of the model, to avoid translation invariance.)
	
	The hyperparameters were optimised (for just the ODE-RNN baseline as previously described) by performing most of a grid search over 16 or 32 hidden channels, 32, 48, 64, 128 hidden layer size, and 1, 2, 3 hidden layers. (The latter two hyperparameters corresponding to the vector fields of the ODE-RNN and Neural CDE models.) A few option combinations were not tested due to the poor performance of similar combinations. (For example every combination with hidden layer size of 128 demonstrated relatively poor performance.) The search was done on just the 30\% missing data case, and the same hyperparameters were used for the 50\% and 70\% missing data cases.
	
	The hyperparameters selected were 32 hidden channels for the Neural CDE and ODE-RNN models, and 47 hidden channels for the GRU-$\Delta$t, GRU-D and GRU-ODE models. The Neural CDE and ODE-RNN models both used a feedforward network for their vector fields, with 3 hidden layers each of width 32. The resulting parameter counts for each model were 8212 for the Neural CDE, 8436 for the ODE-RNN, 8386 for the GRU-D, 8292 for the GRU-$\Delta$t, and 8372 for the GRU-ODE.
	
	\subsection{PhysioNet sepsis prediction}\label{appendix:sepsis}
	The batch size used was 1024 and learning rate used was 0.0032, arrived at as previously described. If the training loss stagnated for 10 epochs then the learning rate was divided by 10 and training resumed. If the training loss or validation accuracy stagnated for 100 epochs then training was terminated. The maximum number of epochs allowed was 200. The learning rate for the final linear layer (a component of every model, mapping from the final hidden state to the output) used a learning rate that 100 times larger, so 0.32.
	
	The original dataset does not come with an existing split, so we took our own 70\%/15\%/15\% train/validation/test split.
	
	As this problem featured static (not time-varying) features, we incorporated this information by allowing the initial condition of every model to depend on these. This was taken to be a single hidden layer feedforward network with ReLU activation functions and of width 256, which we did not attempt a hyperparameter search over.
	
	As this dataset is partially observed, then for the ODE-RNN, GRU-$\Delta$t, GRU-D models, which require \emph{something} to be passed at each time step, even if a value is missing, then we fill in missing values with natural cubic splines, for ease of comparison with the Neural CDE and ODE-RNN models. (We do not describe this as imputation as for the observational intensity case the observational mask is additionally passed to these models.) In particular this differs slightly from the usual implementation of GRU-D, which usually use a weighted average of the last observation and the mean. Splines accomplishes much the same thing, and help keep things consistent between the various models.
	
	The hyperparameters were optimised (for just the ODE-RNN baseline as previously described) by performing most of a grid search over 64, 128, 256 hidden channels, 64, 128, 256 hidden layer size, and 1, 2, 3, 4 hidden layers. (The latter two hyperparameters corresponding to the vector fields of the ODE-RNN and Neural CDE models.)
	
	The hyperparameters selected for the ODE-RNN model were 128 hidden channels, and a vector field given by a feedforward neural network with hidden layer size 128 and 4 hidden layers. In order to keep the number of parameters the same between each model, this was reduced to 49 hidden channels and hidden layer size 49 for the Neural CDE model, and increased to 187 hidden channels for the GRU-$\Delta$t, GRU-D and GRU-ODE models. When using observational intensity, the resulting parameter counts were 193541 for the Neural CDE, 194049 for the ODE-RNN, 195407 for the GRU-D, 195033 for the GRU-$\Delta$t, and 194541 for the GRU-ODE. When not using observational intensity, the resulting parameter counts were 109729 for the Neural CDE, 180097 for the ODE-RNN, 175260 for the GRU-D, 174886 for the GRU-$\Delta$t, and 174921 for the GRU-ODE. Note the dramatically reduced parameter count for the Neural CDE; this is because removing observational intensity reduces the number of channels, which affects the parameter count dramatically as discussed in Section \ref{section:limitations}.
	
	\subsection{Speech Commands}\label{appendix:speechcommands}
	The batch size used was 1024 and the learning rate used was 0.0016, arrived at as previously described. If the training loss stagnated for 10 epochs then the learning rate was divided by 10 and training resumed. If the training loss or validation accuracy stagnated for 100 epochs then training was terminated. The maximum number of epochs allowed was 200. The learning rate for the final linear layer (a component of every model, mapping from the final hidden state to the output) used a learning rate that 100 times larger, so 0.16.
	
	Each time series from the dataset is univariate and of length 16000. We computed 20 Mel-frequency cepstral coefficients of the input as implemented by \texttt{torchaudio.transforms.MFCC}, with logarithmic scaling applied to the coefficients. The window for the short-time Fourier transform component was a Hann window of length 200, with hop length of 100, with 200 frequency bins. This was passed through 128 mel filterbanks and 20 mel coefficients extracted. This produced a time series of length 161 with 20 channels. We took a 70\%/15\%/15\% train/validation/test split.
	
	The hyperparameters were optimised (for just the ODE-RNN baseline as previously described) by performing most of a grid search over 32, 64, 128 hidden channels, 32, 64, 128 hidden layer size, and 1, 2, 3, 4 hidden layers. (The latter two hyperparameters corresponding to the vector fields of the ODE-RNN and Neural CDE models.)
	
	The hyperparameters selected for the ODE-RNN model were 128 hidden channels, and a vector field given by a feedforward neural network with hidden layer size 64 and 4 hidden layers. In order to keep the number of parameters the same between each model, this was reduced to 90 hidden channels and hidden layer size 40 for the Neural CDE model, and increased to 160 hidden channels for the GRU-$\Delta$t, GRU-D and GRU-ODE models. The resulting parameter counts were 88940 for the Neural CDE model, 87946 for the ODE-RNN model, 89290 for the GRU-D model, 88970 for the GRU-dt model, and 89180 for the GRU-ODE model.
\end{document}